\newif\ifpreprint
\author{%
  Kai Br{\"u}gge\\\
  Dept.\ of Computer Science\\
  University of Copenhagen\\
  \texttt{kai.brugge@gmail.com} \And
   Asja Fischer\\
  Faculty of Mathematics\\
  Ruhr University Bochum\\
  \texttt{asja.fischer@rub.de}\\ \And
    Christian Igel\\
  Dept.\ of Computer Science\\
  University of Copenhagen\\
  \texttt{igel@di.ku.dk} \\
}
\newtheorem{lemma}{Lemma}
\newtheorem{corollary}{Corollary}
\newtheorem{theorem}{Theorem}
\def\vec#1{\mathchoice{\mbox{\boldmath$\displaystyle#1$}}
{\mbox{\boldmath$\textstyle#1$}}
{\mbox{\boldmath$\scriptstyle#1$}}
{\mbox{\boldmath$\scriptscriptstyle#1$}}}
\newcommand{\x}{\vec{x}}
\newcommand{\y}{\vec{y}}
\newcommand{\xh}{{f_i(\vec{x})}}
\newcommand{\degp}{\operatorname{deg}^{+}}
\newcommand{\degm}{\operatorname{deg}^{-}}
\newcommand{\ci}[1]{\textcolor{magenta}{#1}}
\newcommand{\af}[1]{\textcolor{blue}{#1}}
\newcommand{\kb}[1]{\textcolor{purple}{#1}}
\renewcommand{\kb}[1]{#1}
\renewcommand{\ci}[1]{#1}
\renewcommand{\af}[1]{#1}
\newcommand{\xbar}[1]{{\overline{#1}}}
\newcommand{\fli}{{f_{\le i-1}}}
\newcommand{\fgi}{{f_{\ge i+1}}}
\newbox\qbox
\def\usecolor#1{\csname\string\color@#1\endcsname\space}
\newcommand\outline[1]{\leavevmode%
  \def\maltext{#1}%
  \setbox\qbox=\hbox{\maltext}%
  \boxgs{Q q 2 Tr \thickness\space w 0 0 0 rg 0 G}{}%
  \copy\qbox%
}
\newcommand\mathbbf[2][.2]{%
  \def\thickness{#1}%
  \ThisStyle{\outline{$\mathbf{\SavedStyle#2}$}}%
}
\begin{document}

\title{On the convergence of the Metropolis algorithm with fixed-order updates for multivariate binary probability distributions}

\maketitle

\begin{abstract}
The Metropolis algorithm is arguably the most fundamental Markov chain Monte Carlo (MCMC) method.  But the algorithm is not guaranteed to converge to the desired distribution in the case of multivariate binary distributions (e.g., Ising models or stochastic neural networks such as Boltzmann machines) if the variables (sites or neurons) are updated in a fixed order, a setting commonly used in practice. The reason is that the corresponding Markov chain may not be irreducible. We propose a modified Metropolis transition operator that behaves almost always identically to the standard Metropolis operator and prove that it ensures irreducibility and convergence to the limiting distribution in the multivariate binary case with fixed-order updates. The result provides an explanation for the behaviour of Metropolis MCMC in that setting and closes a long-standing theoretical gap. We experimentally studied the standard and modified Metropolis operator for models were they actually behave differently. If the standard algorithm also converges, the modified operator exhibits similar (if not better) performance in terms of convergence speed.
\end{abstract}

\section{Introduction}
Markov Chain Monte Carlo (MCMC) algorithms address the problem of
sampling from a probability distribution $p$ by constructing a Markov
chain with stationary distribution equal to $p$. Recording a state of
the chain after running it for some time replaces sampling from $p$.
It has to be ensured that the Markov chain is ergodic, that is,
converges to the stationary distribution irrespective of the starting
state.
The Metropolis algorithm \citep{metropolis1953} established the field
of MCMC and is still widely used.  However, for the basic scenario of
multivariate binary distributions -- as for example given in the case of
Ising models or Boltzmann machines -- and
state-updates that consider these variables in a fixed order, it is
well known that the Metropolis algorithm may not converge to the
desired distribution (see, e.g., \citealp{friedberg1970}), because 
the Markov chain induced can be reducible.

\cite{brugge2013}  have suggested a slightly
modified Metropolis algorithm for sampling of restricted Boltzmann machines.
We extend their work and prove that the modified algorithm induces ergodic Markov chains for all multivariate binary
distributions (in the non-binary case the convergence problem does
not occur).

As a corollary we give a sufficient condition for the standard Metropolis algorithm
to converge. For many classes of models, this condition is fulfilled almost surely. This theoretically justifies the use of the standard Metropolis algorithm with fixed-order updates.

In the next section, we state our main result, which is proven in
Section~\ref{sec:proof}. Section~\ref{sec:examples} provides 
numerical experiments before we conclude in Section~\ref{sec:conclusions}.

\section{Main result}
We consider the important case where $p$ is an $n$-dimensional
multivariate distribution with full support over a finite set
$\Omega^n$ with binary $\Omega$, covering Ising models
\citep{ising1925beitrag,HistoryLenzIsing} and stochastic neural
networks such as restricted Boltzmann machines
\citep{smolensky,HintonCD,fischer:13}. The transition probabilities of
a (homogenous) Markov chain with state-space $\Omega^n$ can be defined
by a transition operator $\vec{T}$ representing the probabilities
$\vec{T}(\vec x\to\vec y)$ of going from state $\x$ to $\y$ in one
step.

If $p$ is a multivariate distribution, it is a common approach to
consider a transition operator that is defined as a concatenation of operators of
the form $\vec{T}=T_n\circ\dots\circ T_2\circ T_1$, where operator
$T_i$ can only change the $i$-th variable (e.g., as in Gibbs sampling,
\citealp{geman84}). We refer to the typical case of always applying
the $n$ operators in the same order as \emph{fixed-order} updates.

For the standard \emph{Metropolis algorithm}
the transition operator for the $i$-th variable is defined as:\footnote{For binary $\Omega$ we need not
  distinguish between proposal distribution and acceptance function,
  because flipping a state covers all proposal distributions.}
\begin{equation}\label{eq:standard}
  T_i(\x\to\xh)=
  \begin{cases}
  1 & \text{if\ } p(\x) \le p(\xh)\\
  \frac{p(\xh)}{p(\x)} & \text{if\ } p(\x) > p(\xh)\\
  \end{cases}
\end{equation}
for $\x=(x_1, \dots, x_{i-1}, x_i, x_{i+1},\dots, x_n)$ and $\xh=(x_1,
\dots, x_{i-1}, \bar{x}_i, x_{i+2},\dots, x_n)$, where $\bar{x}_i$ is
the flipped value of $x_i$.
We have $T_i(\x\to\x)= 1 - T_i(\x\to\xh)$ and $T_i(\x\to\y)=0$ for
$\y\in\Omega^n\setminus\{\x,\xh\}$.
When using this operator, the Markov chain may not converge
to $p$, because it may not be \emph{irreducible}.

A Markov chain is irreducible if one can get from any state to any
other in a finite number of transitions.  Irreducibility is necessary for the chain to always converge to a unique stationary distribution.  For fixed-order
updates, the transition operators of the Metropolis algorithm do not
necessarily lead to an irreducible Markov chain and the chain may not converge,
potentially leading to a failure of the MCMC sampling algorithm. Examples are
given in Section~\ref{app:counter} in the supplementary appendix.

Updating the variables in  random order simplifies the theoretical analysis and guarantees an irreducible Markov chain. In practice, though, a fixed order is
usually preferred, because it can be implemented more
efficiently. Additionally, fixed-order updates are usually assumed to
lead to faster convergence as they ensure that each variable is
updated equally often and no variable is neglected for a longer
time. Still, they can also lead to slower convergence or even a non-ergodic chain
\citep{neal1993}.

For multivariate binary distributions, the Metropolis algorithm is
similar to  Gibbs sampling \citep{geman84}, in fact Gibbs sampling
can be seen as \kb{a} Metropolis algorithm with a different proposal distribution
or acceptance function (the Boltzmann acceptance function). 
A number of papers address the question
of which of those two methods is preferable, see, e.g., \citet{peskun1973,cunningham1974, frigessi1993, peskun1981}. \citet{neal1993} concludes: ``The issues still remain unclear, though it appears that common opinion favors using the Metropolis acceptance function in most circumstances.''

\cite{brugge2013}  proposed a slightly modified Metropolis operator for restricted Boltzmann machines, which we generalize to arbitrary binary distributions.  The  modified Metropolis transition operator only differs in  the case when the current
and proposed state have the same probability:
\begin{equation}\label{eq:modified}
  T_i(\x\to\xh)=
  \begin{cases}
  1 & \text{if\ } p(\x) < p(\xh)\\
  \frac{p(\xh)}{p(\x)} & \text{if\ } p(\x) > p(\xh)\\
  \frac{1}{2} & \text{otherwise}
  \end{cases} \enspace.
\end{equation}
This modification ensures that $p$ remains a stationary distribution,
which is straight-forward to show by proving detailed balance.
In the next section, we prove that for this operator it holds:
\begin{theorem}\label{thm:main}
Let $p$ be a distribution with full support over $\Omega^n$ for binary
$\Omega$ and $n\ge 1$.
The Markov chain induced by the modified Metropolis operator
\eqref{eq:modified} and fixed-order updates is
irreducible and aperiodic (and therefore ergodic). 
\end{theorem}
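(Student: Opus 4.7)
My plan is to settle aperiodicity first via an explicit self-loop at the global mode, and then prove irreducibility by induction on the number of variables $n$.

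\paragraph{Aperiodicity.} Fix a global mode $\x^{\star}\in\arg\max_{\x}p(\x)$. For every coordinate $i$, $p(\x^\star)\ge p(\x^\star\oplus e_i)$, so by~\eqref{eq:modified} the no-flip probability $T_i(\x^\star\to\x^\star)$ equals either $1-p(\x^\star\oplus e_i)/p(\x^\star)>0$ in the strict case or $1/2$ in the tied case---and it is precisely the tied branch of the modified rule that makes this positive. Because ``no flip'' keeps the running state at $\x^\star$ throughout the sweep, $\vec{T}(\x^\star\to\x^\star)\ge\prod_{i=1}^{n}T_i(\x^\star\to\x^\star)>0$, which combined with irreducibility yields aperiodicity.

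\paragraph{Irreducibility.} I would induct on $n$. The base case $n=1$ is immediate because $p(\bar x)>0$ forces $T_1(x\to\bar x)>0$. For the inductive step, factor $\vec{T}=T_n\circ\vec{U}$ with $\vec{U}=T_{n-1}\circ\cdots\circ T_1$. Since $\vec{U}$ leaves $x_n$ invariant, its restriction to each layer $L_c=\{\x\in\Omega^n:x_n=c\}$ coincides with the modified Metropolis sweep on $\Omega^{n-1}$ against the target $p(\cdot,c)$, which is irreducible by the induction hypothesis. Gluing these layer-chains relies on $T_n$, whose flip probability is always strictly positive and whose no-flip probability is strictly positive exactly when $p(\cdot,c)\ge p(\cdot,\bar c)$. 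A convenient anchor is the global mode $\x^\star$: it lies in some layer $L_{c^\star}$ and admits the no-flip option of $T_n$ there, so iterated sub-sweeps starting from $\x^\star$ can visit every state of $L_{c^\star}$, while a further $\vec{T}$ whose $T_n$-step flips carries the chain into $L_{\bar c^\star}$.

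\paragraph{Main obstacle.} The delicate point is that a single application of $\vec{T}$ bundles one sub-sweep with one $T_n$-step, so iterating sub-sweeps within a chosen layer requires the running state to admit the no-flip option of $T_n$. This can fail uniformly on one layer if the other strictly $p$-dominates it, in which case one cannot iterate inductively inside the dominated layer. The correct path then detours through the dominant layer, uses the inductive irreducibility there to reach a pre-image $\z$ of the target $\y$ (meaning $\vec{U}$ in the dominant layer lands at $\z=\y\oplus e_n$), and lets the final $T_n$-flip convert $\z$ into $\y$. Checking that the modification in~\eqref{eq:modified} supplies enough slack at tied probabilities to keep this construction available for every pair $(\x,\y)$---and in particular that no ``forced uphill'' step blocks the detour---is the crux of the argument.
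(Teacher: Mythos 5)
Your aperiodicity argument is correct and is essentially the paper's own (a positive no-flip probability at a global mode, which the tied branch of \eqref{eq:modified} guarantees). The irreducibility argument, however, has a genuine gap, and it is exactly the one you flag yourself at the end. The induction hypothesis gives irreducibility of the chain obtained by \emph{iterating} $\vec{U}=T_{n-1}\circ\cdots\circ T_1$ inside a layer $L_c$, but the actual chain never iterates $\vec{U}$: every sub-sweep is immediately followed by a $T_n$-step, and that step \emph{forces} an exit from the layer whenever the current state $\z$ satisfies $p(\z)<p(f_n(\z))$. This is a pointwise condition, so in general neither layer is uniformly ``dominant''; both layers typically contain states from which $T_n$ must flip. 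Consequently your claim that ``iterated sub-sweeps starting from $\x^\star$ can visit every state of $L_{c^\star}$'' is unjustified and false in general as stated: already the first $\vec{U}$-step from $\x^\star$ can land on a state strictly dominated by its $f_n$-neighbour, ejecting the chain from the layer before a second sub-sweep can occur. The proposed detour does not repair this, because reaching the pre-image $\z=f_n(\y)$ in the other layer again requires many $\vec{U}$-steps there, each followed by a $T_n$-step that can force an exit; the resulting excursions between layers are coupled to the sub-sweep dynamics in a way your sketch does not control. Since the entire difficulty of the theorem is that forced moves can conspire to trap the chain in a proper subset (cf.\ the striped-pattern counterexamples for the standard operator), the step you defer as ``the crux of the argument'' is precisely the theorem, and the proof is incomplete as written.

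For contrast, the paper avoids the layer decomposition entirely. Its key lemma shows that if $f_i(\x)$ is unreachable from $\x$, then the \emph{strict} inequalities $p(f_{\le i-1}(\x))<p(f_{\le i}(\x))$ and $p(f_{\ge i}(\x))<p(f_{\ge i+1}(\x))$ must hold, by exhibiting an explicit two-sweep path $\x\to\xbar{f_i(\x)}\to f_i(\x)$ that flips \emph{all} variables; this global maneuver through the complementary layer and back is exactly what a layer-local induction misses. Closedness of a candidate set $S$ then yields a family of strict inequalities, encoded as edges of a graph $G(S)$, and an induction on $|S|$ shows every vertex of $G(S)$ has equal in- and out-degree, hence $G(S)$ contains a directed cycle, hence a cyclic chain of strict inequalities---a contradiction. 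To rescue your induction on $n$ you would at minimum need a mechanism, analogous to the all-flip path, for circumventing states that cannot survive the $T_n$-step; at that point you would essentially be rebuilding the paper's argument.
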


The derivation of this theorem relies on a novel proof strategy (construction of graphs where cycles correspond to contradictions and doing induction on these graphs).
We use tools that, to our knowledge,  have not been applied in that form to the analysis of MCMC algorithms. It would have been, for example, not possible to follow the approach by \cite{brugge2013}, which requires that the graph of the MRF is bipartite, while our proof is valid for all binary models (e.g., general Boltzmann machines).

From Theorem~\ref{thm:main} follows a sufficient condition under which the standard
Metropolis algorithm does not differ from the modified
version and is therefore safe to use:
\begin{corollary}
Let $p$ be a distribution with full support over $\Omega^n$ for binary
$\Omega$ and $n\ge 1$. If
$p((x_1, \dots, x_{i-1}, x_i, x_{i+2}, x_n))\neq p((x_1,
\dots, x_{i-1}, \bar{x}_i, x_{i+2},\dots, x_n))$ for all
$i=1,\dots,n$, then the 
Markov chain induced by the standard Metropolis operator
\eqref{eq:standard} and fixed-order updates is
irreducible and aperiodic. 
\end{corollary}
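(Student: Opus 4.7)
The plan is straightforward: show that under the stated hypothesis the standard Metropolis operator \eqref{eq:standard} and the modified operator \eqref{eq:modified} define identical transition probabilities, and then simply invoke Theorem~\ref{thm:main}.

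The two operators differ only in the tie case $p(\x) = p(\xh)$: the standard operator \eqref{eq:standard} falls into its first branch and accepts the flip with probability $1$, while the modified operator \eqref{eq:modified} accepts with probability $\tfrac{1}{2}$. The corollary's hypothesis guarantees that, for every state $\x \in \Omega^n$ and every coordinate $i \in \{1,\dots,n\}$, we have $p(\x) \neq p(\xh)$, so the ``otherwise'' clause of \eqref{eq:modified} is never triggered. First I would compare the two piecewise definitions side by side to verify that whenever $p(\x) \neq p(\xh)$ they assign the same number in $\{1, p(\xh)/p(\x)\}$ to $T_i(\x \to \xh)$, and consequently the same self-loop probability $T_i(\x \to \x)$ as well. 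This yields pointwise equality $T_i^{\text{std}} = T_i^{\text{mod}}$ for each $i$, and hence equality of the composite operators $\vec T = T_n \circ \cdots \circ T_1$ in the two settings.

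Since the two Markov chains are then literally the same stochastic process on $\Omega^n$, Theorem~\ref{thm:main} applied to the modified chain immediately delivers irreducibility and aperiodicity for the standard one. There is no real obstacle here; the corollary is a definitional observation layered on top of the main theorem. The only point that warrants a moment of care is confirming that the tie-breaking clause is the \emph{sole} source of discrepancy between \eqref{eq:standard} and \eqref{eq:modified}, which is immediate from the piecewise definitions and the fact that in the binary setting $\xh$ is the unique neighbor of $\x$ reachable by $T_i$.
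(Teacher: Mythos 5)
Your proposal is correct and matches the paper's intended argument exactly: the paper itself introduces the corollary as describing the condition ``under which the standard Metropolis algorithm does not differ from the modified version,'' and the footnote in the proof of Lemma~\ref{lemma1} makes the same observation that the tie case $p(\x)=p(\xh)$ is the sole point of divergence between \eqref{eq:standard} and \eqref{eq:modified}. Ruling out ties makes the two operators pointwise identical, so Theorem~\ref{thm:main} transfers immediately.
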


For many use cases, for example spin glasses, where coupling strengths are
drawn from a continuous distribution, or Boltzmann machines, where
weights are initialized randomly,  the condition in Corollary \ref{eq:standard}
holds almost always.
Ising models with uniform coupling strength are an important case
where  the condition does not hold (see supplementary Appendix~\ref{app:counter}).
We suggest to use the modified Metroplis operator in cases where the
standard operator would not lead to an irreducible chain.
Section~\ref{sec:examples} demonstrates the performance of the new operator empirically.


\section{Proof of main result}\label{sec:proof}
To prove Theorem~\ref{thm:main},
we use a basic theorem
  (e.g., see \citealp{billingsley1995measure}, we refer to  \citealp{hobert07}, for a proof):
\begin{theorem}\label{thm:closedset}
A set $C\subseteq \Omega^n$ is \emph{closed} given a Markov chain if for all
$\x \in C:
\sum_{\y\in C}\vec{T}(\x\to\y)= 1$
(i.e., once the
chain enters $C$ it cannot leave). A Markov chain is irreducible if
the only closed subset is $C=\Omega^n$.
\end{theorem}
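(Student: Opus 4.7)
The statement combines a definition of \emph{closed} sets with a characterization of irreducibility; the substantive claim to prove is the latter, which I will establish as an ``if and only if'' using the standard reachability definition of irreducibility (for every $\x, \y \in \Omega^n$ there exists $k \ge 0$ with $\vec{T}^k(\x \to \y) > 0$). My plan is to prove both directions by pairing the closure property with the existence of positive-probability paths.

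For the forward direction, assume the chain is irreducible and let $C$ be any nonempty closed subset. Fix $\x \in C$ and, for an arbitrary $\y \in \Omega^n$, use irreducibility to obtain $k$ with $\vec{T}^k(\x \to \y) > 0$. The key observation is that closure propagates along single-step transitions: if $\x' \in C$ and $\vec{T}(\x' \to \vec{z}) > 0$, then $\vec{z} \in C$, since otherwise $\sum_{\vec{w} \in C} \vec{T}(\x' \to \vec{w}) < 1$, contradicting the closure condition. Iterating this observation along the positive-probability path from $\x$ to $\y$ forces $\y \in C$, so $C = \Omega^n$.

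For the converse, assume that $\Omega^n$ is the only nonempty closed subset. Fix $\x \in \Omega^n$ and define the reachable set $R(\x) = \{\y \in \Omega^n : \exists k \ge 0,\ \vec{T}^k(\x \to \y) > 0\}$. The main step is to verify that $R(\x)$ is itself closed: if $\y \in R(\x)$ and $\vec{T}(\y \to \vec{z}) > 0$, then $\vec{T}^{k+1}(\x \to \vec{z}) \ge \vec{T}^k(\x \to \y)\,\vec{T}(\y \to \vec{z}) > 0$, hence $\vec{z} \in R(\x)$, and summing the transition probabilities out of $\y$ over $R(\x)$ gives $1$. Taking $k = 0$ shows $\x \in R(\x)$, so $R(\x)$ is nonempty, and the hypothesis forces $R(\x) = \Omega^n$. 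Since $\x$ was arbitrary, every state is reachable from every other, which is exactly irreducibility.

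The result is classical and I anticipate no real obstacle. The only mild subtleties are the treatment of the empty set (which vacuously satisfies the closure condition and must be excluded, hence the reference to \emph{nonempty} closed subsets) and the need to propagate the one-step closure property along a multi-step path in the forward direction; both are routine once the reachable set $R(\x)$ is identified as the bridge between the two formulations.
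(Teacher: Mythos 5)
Your proof is correct. Note, however, that the paper does not prove Theorem~\ref{thm:closedset} at all: it is quoted as a classical fact with pointers to the literature (Billingsley; Hobert and Jones), so there is no in-paper argument to compare against. Your route via the reachable set $R(\x)=\{\y : \exists k\ge 0,\ \vec{T}^k(\x\to\y)>0\}$ is exactly the standard textbook proof: the forward direction propagates the one-step closure property along a positive-probability path (whose existence follows from the Chapman--Kolmogorov decomposition of $\vec{T}^k(\x\to\y)$), and the converse identifies $R(\x)$ as a nonempty closed set. Two small remarks. First, the paper only ever uses one direction of your equivalence, namely that absence of proper nonempty closed subsets implies irreducibility, which is your converse; proving the ``iff'' is a harmless bonus. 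Second, you are right to insert the word \emph{nonempty}: as literally stated in the paper, the empty set vacuously satisfies the closure condition, so the hypothesis ``the only closed subset is $C=\Omega^n$'' could never be met; your reading is the intended one and your handling of it is correct.
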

We show irreducibility by proving  that there exists
no proper subset  of $\Omega^n$ that cannot be left and then applying Theorem~\ref{thm:closedset}.
The proof involves the following steps. We first establish a
relation between possible transitions between states that differ only
in one variable $i$ and the probabilities of
states resulting from flipping all variables with a higher (or lower)
index 
than $i$ including and excluding $i$.
Then we show via
contradiction that all singleton subsets of the state space cannot be closed.
Then we show the same for sets with more than one element. To this end, we map each subset to a graph that contains
cycles if the subset cannot be left and prove
by induction that for all
proper subsets $S$ the graph $G(S)$ contains cycles and therefore the Markov chain is irreducible. Showing aperiodicity is straightforward.


\paragraph{Basic lemma}

Let us denote the state $(\bar{x}_1, \dots, \bar{x}_n)$, where all
sites (variables) are flipped to their opposite values, by
$\xbar{\vec{x}}$.  We denote the state
$(x_1, \dots, x_{i-1}, \bar{x}_i, x_{i+1}, \dots, x_n)$, where only
the $i$-th site is flipped, by $f_i(\vec{x})$, the state where the first
$i$ sites are flipped by $f_{\leq i}(\vec{x})=(\bar{x}_1, \dots, \bar{x}_i, x_{i+1}, \dots x_n)$, and
the state where the last
$i$ sites are flipped by  $f_{\geq i}= (x_1, \dots, {x}_{i+1}, \bar{x}_i, \dots, \bar{x}_n)$.
We define the special boundary cases
\begin{equation}\label{eq:boundary}
  f_{\leq 0}(\x) = f_{\geq n+1}(\x) =\x\enspace.
\end{equation}
If a state $\vec{y}$ can be reached from a state $\vec{x}$ in an arbitrary number of steps, we  write $\vec{x} \rightarrow \vec{y}$.
If  $\vec{y}$ cannot be reached from $\vec{x}$ in any numbers of steps, we  write $\vec{x} \not\rightarrow \vec{y}$.

The following lemma establishes a relationship between properties of the stationary distribution $p$ and {impossible} transitions of the Markov chain.  
\begin{lemma}
\label{lemma1}
For all $i=1,\dots,n$:
\begin{align*}
\vec{x} \not\rightarrow f_i(\vec{x})  &  \Rightarrow   p(f_{\leq i-1}(\vec{x})) < p(f_{\leq i}(\vec{x}))    \tag{a} \label{lemma1a} \\     
\vec{x} \not\rightarrow f_i(\vec{x})  &  \Rightarrow   p(f_{\geq i}(\vec{x}))   < p(f_{\geq i+1}(\vec{x}))				      \tag{b}\label{lemma1b}
\end{align*} 

\end{lemma}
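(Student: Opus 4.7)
The plan is to prove each implication by contrapositive, exhibiting an explicit path from $\x$ to $\xh$ in the chain induced by the modified operator~\eqref{eq:modified}. The key structural observation is that, because $p$ has full support, the flip transition $T_j(\y\to f_j(\y))$ is always strictly positive, while the stay transition $T_j(\y\to\y)$ is strictly positive precisely when $p(\y)\ge p(f_j(\y))$. In particular, a sweep in which every site is flipped always succeeds with positive probability, so $\x\to\xbar{\x}$ holds unconditionally, and likewise $\xbar{\x}\to\x$.

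For part~\eqref{lemma1b}, assume $p(f_{\ge i}(\x))\ge p(\fgi(\x))$. I use the two-sweep path $\x\to\xbar{\x}\to\xh$: the first sweep flips every site; the second sweep, starting from $\xbar{\x}$, flips sites $1,\dots,i-1$ (reaching $f_{\ge i}(\x)$), then does not flip site $i$ (possible precisely under the assumed inequality), then flips sites $i+1,\dots,n$, ending at $\xh$.

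For part~\eqref{lemma1a}, assume $p(\fli(\x))\ge p(f_{\le i}(\x))$. By symmetry, the idea is to first reach the state $(\bar{x}_1,\dots,\bar{x}_{i-1},x_i,\bar{x}_{i+1},\dots,\bar{x}_n)$ -- every site except $i$ flipped -- and only then flip everything in a final sweep to arrive at $\xh$. The first sweep, starting from $\x$, flips sites $1,\dots,i-1$ (reaching $\fli(\x)$), does not flip site $i$ (using the assumed inequality), and then flips sites $i+1,\dots,n$. A second sweep that flips every site then yields $\xh$.

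The main obstacle is recognising that a single sweep cannot suffice: reaching $\xh$ from $\x$ in one pass would require $p(\x)\ge p(f_j(\x))$ for every $j<i$ together with $p(\xh)\ge p(f_j(\xh))$ for every $j>i$ -- a total of $n-1$ inequalities -- whereas the lemma supplies only one. The resolution is to arrange the two sweeps so that exactly one stay step aligns with the hypothesised inequality, while every other step is a flip, which is always available under the modified operator. The boundary cases $i=1$ and $i=n$ need no separate treatment thanks to the conventions $f_{\le 0}(\x)=f_{\ge n+1}(\x)=\x$ from~\eqref{eq:boundary}, though they are worth double-checking when the proof is written out in full.
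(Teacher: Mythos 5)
Your proposal is correct and follows essentially the same route as the paper's proof: assume the reverse (non-strict) inequality, use full support to make every flip available, spend the single hypothesised inequality on the one ``stay'' step at site $i$, and reach $f_i(\x)$ via two complete sweeps through $\overline{f_i(\x)}$ (for part (a)) or through $\overline{\x}$ (for part (b)). The only cosmetic difference is that you treat (b) first and derive (a) by symmetry, whereas the paper does the opposite.
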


\begin{proof}
For the proof of \eqref{lemma1a} we assume
\begin{equation}
p(f_{\leq i-1}(\vec{x})) \geq p(f_{\leq i}(\vec{x})) \label{assumption}
\end{equation}
and show that $\vec{x} \rightarrow f_i(\vec{x})$ follows. We do this
by constructing a chain of valid transitions from $\vec{x}$ to
$f_i(\vec{x})$. Note, that it is not sufficient to show that one can
reach $f_i(\vec{x})$ by applying only part of the transition operators $T_i \circ \dots \circ T_2 \circ T_1$, for $i < n$,
but one has also to show  
that it is  possible to stay in this state until the
completion of the full update step (i.e., under the remaining transition operators $T_n \circ \dots \circ T_{i+1}$).

Flipping a site is always possible,  {because} $0 < p(f_{\leq i}(\vec{x})) / p(f_{\leq i-1}(\vec{x}))$,
{since} we assume $p$ having full support. Thus, for
$i>1$ we can transition from $\vec{x}$ to {$ f_{\leq i-1}(\vec{x})
  $} in the first
$i-1$ partial steps. We can then stay in this state in the
$i$-th partial transition step by assumption \eqref{assumption} (we
need the assumption because staying in a state is only possible if it
has larger or equal probability than the newly proposed state).\footnote{The ability to stay in a certain state is the one point in the proof that works only for the modified
 Metropolis algorithm, but not the classical Metropolis
 algorithm. With the classical algorithm it is not possible to stay in
 the current state if $p(\fli(\vec{x})) = p(f_{\leq i}(\vec{x}))$,
 with the modified Metropolis operator it is.
{This also explains why the} proof carries over to the classical Metropolis algorithm
if there does not exist a $\vec{x} \in \Omega^n$ with $p(\fli(\vec{x})) = p(f_{\leq i}(\vec{x}))$.}
The latter argument also proves $\vec{x} \to f_{\leq i-1}(\vec{x})
  $ for $i=1$, which corresponds to staying in $\x$, see \eqref{eq:boundary}.

We continue to flip all the sites {with the partial transition
  operators} and end up in $\fgi(\fli(\vec{x})) =
\xbar{f_i(\vec{x})}$, the state where all sites but the $i$-th site are
flipped, at the end of the complete update step (i.e., after all $n$
partial update steps). We can then transition to $f_i(\vec{x})$ by
flipping all sites {in another complete update step}. Thus, we have
{created} a path from $\vec{x}$ to $f_i(\vec{x})$ using two full
update steps, which contradicts $\vec{x} \not\rightarrow f_i(\vec{x})$
in {\eqref{lemma1a}}.

To prove \eqref{lemma1b}, we 
analogously  construct a path involving two full update steps 
by first flipping all sites and then flipping all sites but the $i$-th.
\end{proof}

\paragraph{{Singleton subsets cannot be closed}}

Now we show that a singleton subset of the discrete state space cannot form a
closed set:

\begin{lemma} \label{lemma:singeltonSet}
Let $S=\{\x\} \subset \Omega^n$. It holds
$\exists \y \in \Omega^n \setminus  S: \x \rightarrow \y \enspace.$
\end{lemma}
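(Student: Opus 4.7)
The plan is to argue by contradiction, assuming that $\{\vec{x}\}$ is closed in the sense of Theorem~\ref{thm:closedset} and that therefore $\vec{x} \not\rightarrow \vec{y}$ for every $\vec{y} \neq \vec{x}$. In particular, since $f_i(\vec{x}) \neq \vec{x}$ for all $i \in \{1,\dots,n\}$, one has $\vec{x} \not\rightarrow f_i(\vec{x})$ for each such $i$, so both parts of Lemma~\ref{lemma1} become available simultaneously for every index. The idea is then to chain these inequalities together along the $f_{\leq i}$ sequence in one direction and along the $f_{\geq i}$ sequence in the other direction, producing a cyclic strict inequality on $p(\vec{x})$.

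Concretely, I would first apply Lemma~\ref{lemma1}\eqref{lemma1a} for every $i = 1,\dots,n$, obtaining the telescoping chain
\begin{equation*}
p(f_{\leq 0}(\vec{x})) < p(f_{\leq 1}(\vec{x})) < \dots < p(f_{\leq n}(\vec{x})),
\end{equation*}
which by the boundary convention \eqref{eq:boundary} and $f_{\leq n}(\vec{x}) = \xbar{\vec{x}}$ collapses to $p(\vec{x}) < p(\xbar{\vec{x}})$. Next I would apply Lemma~\ref{lemma1}\eqref{lemma1b} for every $i = 1,\dots,n$ to obtain
\begin{equation*}
p(f_{\geq 1}(\vec{x})) < p(f_{\geq 2}(\vec{x})) < \dots < p(f_{\geq n+1}(\vec{x})),
\end{equation*}
which by $f_{\geq 1}(\vec{x}) = \xbar{\vec{x}}$ and \eqref{eq:boundary} collapses to $p(\xbar{\vec{x}}) < p(\vec{x})$. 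Combining the two yields $p(\vec{x}) < p(\vec{x})$, the desired contradiction, so some reachable $\vec{y} \neq \vec{x}$ must exist.

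I do not expect any serious obstacle here: the lemma does essentially all the work, and the only mild subtlety is correctly matching the two boundary conventions so that the two telescoping chains meet at $\vec{x}$ on one end and at $\xbar{\vec{x}}$ on the other. The two halves of Lemma~\ref{lemma1} are designed in a complementary way precisely so that assuming every single-flip transition from $\vec{x}$ is impossible forces the stationary probability to strictly increase along a Hamming path from $\vec{x}$ to $\xbar{\vec{x}}$ and also along a Hamming path back, which is clearly impossible.
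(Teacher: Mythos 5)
Your proposal is correct and follows essentially the same route as the paper's own proof: negate the claim, deduce $\x \not\rightarrow f_i(\x)$ for all $i$, chain the inequalities from Lemma~\ref{lemma1}\eqref{lemma1a} into $p(\x) < p(\xbar{\x})$ and those from Lemma~\ref{lemma1}\eqref{lemma1b} into $p(\xbar{\x}) < p(\x)$, and conclude by contradiction. Your handling of the boundary conventions \eqref{eq:boundary} is consistent with the paper's two sequences of inequalities.
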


\begin{proof}
We prove {the claim} by showing that the negation
\begin{equation} \label{negation}
\forall \y \in \Omega^n \setminus S: \x \not\rightarrow \y
\end{equation}
leads to a contradiction. 
Assuming \eqref{negation},
it {follows that} all states which differ from $\x$ in only one site {cannot be reached from $\x$, that is}:
\begin{equation}
\x \not\rightarrow f_i(\x)\text{ for }i=1,\dots,n \enspace.
\end{equation}
{Now,} from Lemma~\ref{lemma1} (a)  {follows that for $i=1,\dots,n: p(f_{\leq i-1}(\vec{x})) < p(f_{\leq i}(\vec{x})) $.}
%
%
From these inequalities we can construct {the following} sequence
\[
p(\vec{x}) < p(f_{\leq 1}(\vec{x})) < ....  <  p(f_{\leq n-1}(\vec{x})) < p(\xbar{\vec{x}}) \enspace.
\]
From Lemma~\ref{lemma1} (b) equivalently {follows}  a second sequence
\[
p(\xbar{\vec{x}}) < p(f_{\geq 2}(\vec{x})) < \dots < p(f_{\geq n}(\vec{x})) < p(\vec x) \enspace.
\]
Together these two 
{sequences} of inequalities lead to the contradiction $p(\vec x) < p( \vec x)$.
\end{proof}

\paragraph{Reduction to a graph problem}

Next, we  generalize  Lemma~\ref{lemma:singeltonSet} to arbitrary subsets of
$\Omega^n$. That is, we want to prove that
for all $S \subset \Omega^n$, \af{$\exists \vec x \in \Omega^n$ and}  $\exists \vec y \in \Omega^n\setminus S$ such that $\x \rightarrow \y$ and thus $S$ cannot be a closed set. The proof follows a similar line of thoughts as  the proof of  Lemma~\ref{lemma:singeltonSet}. 
We show that assuming the contrary, namely that there exists a $S \subset \Omega^n$ such that
\begin{equation}
\forall \x \in S: \forall  \y \in \Omega^n \setminus S:  \x \not\rightarrow  \y 
\end{equation}
and therefore specifically
\begin{equation}
\label{eq:proof.graph}
\forall \x \in S: \forall  f_i(\x) \in \Omega^n \setminus S:  \x \not\rightarrow     f_i(\x) \enspace,
\end{equation}
leads to a contradiction. Since reasoning about 
sequences of inequalities gets complicated when dealing with larger sets,
we reduce the problem to a graph problem via  mapping each subset $S$
  to a graph $G(S)$, such that
each inequality resulting from applying Lemma~\ref{lemma1} to
statement \eqref{eq:proof.graph} corresponds to an edge in the graph,
and a contradiction to statement \eqref{eq:proof.graph} arises if the
graph contains cycles.

\begin{lemma}\label{lem:reduction}
  Let $S \subset \Omega^n$ and let
  $G(S) = (\Omega^n, E(S))$ be defined as the
directed graph 
  with
edge set
\begin{equation}
E(S) = \lbrace \big( f_{\leq i-1}(\vec{x}), f_{\leq i}(\vec{x}) \big),
       \big(f_{\geq i}(\vec{x}), f_{\geq i+1}(\vec{x})\big) \,|\,  \x \in S \wedge f_i(\x) \in \Omega^n \setminus S \rbrace\enspace.
     \end{equation}
     If  $G(S)$ contains a cycle, then at least one state $f_i(\x)$ outside $S$ can be reached from some $\x \in S$, that is
\begin{equation}
\label{eq:proof.graph.lem}
\exists \x \in S: \exists  f_i(\x) \in \Omega^n \setminus S:  \x \rightarrow     f_i(\x) \enspace.
\end{equation}
   \end{lemma}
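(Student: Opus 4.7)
The plan is to prove the contrapositive: assume that no such $\x \to f_i(\x)$ transition exists, i.e.
\[
\forall \x \in S: \forall f_i(\x) \in \Omega^n \setminus S: \x \not\rightarrow f_i(\x),
\]
and show that $G(S)$ then has no cycle. Combined with the assumption that $G(S)$ does contain a cycle, this yields the desired contradiction.

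The key observation is that the two edge types in $E(S)$ are precisely the edges produced by the two parts of Lemma~\ref{lemma1}. Under the assumption above, for every $\x \in S$ and every $i$ with $f_i(\x) \notin S$, Lemma~\ref{lemma1}\eqref{lemma1a} gives $p(f_{\leq i-1}(\x)) < p(f_{\leq i}(\x))$, and Lemma~\ref{lemma1}\eqref{lemma1b} gives $p(f_{\geq i}(\x)) < p(f_{\geq i+1}(\x))$. Hence every directed edge $(u,v) \in E(S)$ satisfies $p(u) < p(v)$: the probability $p$ is a strict potential function on $G(S)$.

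Now suppose $G(S)$ contains a cycle $v_0 \to v_1 \to \dots \to v_k = v_0$. Chaining the strict inequalities along the edges gives
\[
p(v_0) < p(v_1) < \dots < p(v_k) = p(v_0),
\]
which is a contradiction. Therefore the assumption $\forall \x \in S, \forall f_i(\x) \in \Omega^n \setminus S: \x \not\rightarrow f_i(\x)$ must fail, i.e.\ \eqref{eq:proof.graph.lem} holds.

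I do not anticipate a real obstacle: the content of the lemma is simply a repackaging of Lemma~\ref{lemma1} into a graph-theoretic language, so the only thing to check carefully is that the definition of $E(S)$ matches exactly the two consequents of Lemma~\ref{lemma1}, and that the boundary conventions \eqref{eq:boundary} cause no issues (the degenerate edges $(\x,\x)$ coming from $i=1$ or $i=n$ are self-loops, which would themselves already be cycles violating $p(\x) < p(\x)$, but in that case the argument is even simpler). The mild subtlety is recognizing that the statement we want is the contrapositive of ``strict $p$-increase along every edge prohibits cycles'', which makes a proof by contradiction the cleanest presentation.
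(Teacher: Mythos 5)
Your proof is correct and is essentially identical to the paper's own argument: both assume no transition $\x \rightarrow f_i(\x)$ exists, invoke Lemma~\ref{lemma1} to turn every edge of $E(S)$ into a strict inequality $p(u) < p(v)$, and conclude that a cycle would force $p(v_0) < p(v_0)$. Your write-up just spells out the chaining of inequalities and the boundary cases more explicitly than the paper does.
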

   \begin{proof}
    Assume that for all $\x \in S$ we have $ \forall f_i(\x) \in
       \Omega^n \setminus S: \x \not\rightarrow  f_i(\x)$.
   Then Lemma~\ref{lemma1} states that  $(\x,\y) \in E(S)$ implies
   the \emph{strict} inequality $p(\x) < p(\y)$. That is, if $G(S)$ contains a cycle, then the
   assumption cannot be fulfilled.
\end{proof}


 To detect cycles, we make use of a basic theorem from graph theory:
\begin{theorem}
\label{graph_corollary}
Let $G$ be a directed graph with at least one edge and assume that $\degp(v) = \degm(v)$ for every vertex $v$. Then there exists a directed cycle.
\end{theorem}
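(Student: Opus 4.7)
The plan is to produce a cycle by extending an arbitrary directed walk until some vertex is revisited, using the degree balance condition to guarantee that the walk can always be continued. Since the theorem will be applied to $G(S)$, whose vertex set is the finite set $\Omega^n$, I will implicitly rely on finiteness of the vertex set so that a pigeonhole argument closes the walk into a cycle.

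Concretely, I would first pick any edge $(u_0, u_1) \in E(G)$, which exists since $G$ has at least one edge. Then I would inductively construct a walk $u_0, u_1, u_2, \ldots$ as follows: given that we have just used the edge $(u_{k-1}, u_k)$ to arrive at $u_k$ for some $k \ge 1$, this edge witnesses $\degm(u_k) \ge 1$, hence by hypothesis $\degp(u_k) \ge 1$, so there exists at least one outgoing edge $(u_k, u_{k+1})$ we can take (we do not need to avoid previously used edges, only to show that the walk can be prolonged).

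By the finiteness of the vertex set and the pigeonhole principle, there must exist indices $i < j$ with $u_i = u_j$. The segment $u_i \to u_{i+1} \to \cdots \to u_{j-1} \to u_j = u_i$ is then a directed closed walk of positive length, and any closed directed walk of positive length contains a directed cycle (by taking, for instance, a minimal closed sub-walk within it, whose vertices are necessarily distinct except for the coinciding endpoints).

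The main obstacle is really only bookkeeping: one must be careful to distinguish a closed walk from a simple directed cycle and to note that the degree condition at $u_k$ is a statement about total in/out-degrees in $G$, independent of which edges the walk has already consumed, so the extension step is legitimate at every stage. Finiteness of the vertex set, which in our application is inherited from $|\Omega^n| = 2^n < \infty$, is the essential ingredient that makes the pigeonhole step work and is the only hypothesis beyond the stated degree condition that the argument silently uses.
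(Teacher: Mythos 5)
Your argument is correct. Note, however, that the paper itself offers no proof of this statement at all: it is invoked as ``a basic theorem from graph theory'' and used as a black box, so there is nothing to compare your proof against on the paper's side -- you have supplied a proof where the authors supply a citation. Your route is the standard one: start from any edge, extend the walk forever because arriving at $v$ certifies $\degm(v)\ge 1$ and hence $\degp(v)\ge 1$, close the walk by pigeonhole, and extract a simple cycle as a minimal closed sub-walk. The two points you flag are exactly the right ones to flag. First, the degree hypothesis is about the whole graph, not about unused edges, so the extension step never gets stuck (this is what distinguishes the argument from the more delicate Eulerian-trail construction, which you correctly avoid). Second, finiteness of the vertex set is genuinely needed and is silently assumed by the paper's statement -- the bi-infinite directed path has $\degp(v)=\degm(v)=1$ everywhere and no cycle -- but it holds in the only application, where the vertex set is $\Omega^n$ with $|\Omega^n|=2^n$. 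An equally standard alternative would be to take a \emph{maximal} directed path and observe that its endpoint's out-neighbour must already lie on the path, but that buys nothing over your version.
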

{Here $\degp(v)$  and $\degm(v)$
are the numbers of outgoing and ingoing edges of $v$, respectively.}

\begin{proof}[Proof of Theorem~\ref{thm:main}]
To prove \textbf{irreducibility}, we prove that for all $S\subset \Omega^n$ the graph
   $G(S)$ as defined in Lemma~\ref{lem:reduction}
   contains a cycle. More specifically, we will apply Theorem~\ref{graph_corollary} after showing that for all $S \subset
   \Omega^n$ the graph $G(S)$ has the property 
\begin{equation}
\label{eq:in_eq_out}
\degp(v) = \degm(v)  \enspace, v \in \Omega^n \enspace.
\end{equation}
From the existence of a cycle {for all $S \subset \Omega^n$}, we know
that no {$S \subset \Omega^n$ can be a closed set}
by applying Lemma~\ref{lem:reduction}. 
{Having shown that}
there is
no proper subset of the state space that is a closed set,
{applying Theorem~\ref{thm:closedset} yields}
that the Markov chain is irreducible. 


We prove the property by induction. For the $\textbf{base case}$, we consider  singleton subsets $
\{\x'\}$ and the corresponding edge set:
\begin{equation}
\label{eq:singleton}
   E(\{\x'\}) = \lbrace \big( f_{\leq i-1}(\vec{\x'}), f_{\leq
     i}(\vec{x}') \big), 
        \big(f_{\geq i}(\vec{x}'), f_{\geq i+1}(\vec{x}')\big) \,|\, i=1,\dots n \rbrace \enspace.
 \end{equation}
 

\kb{These edges form a cycle, which corresponds exactly to the contradiction arising
from the sequence of inequalities discussed in the proof of Lemma~\ref{lemma:singeltonSet}. 
Each node of the cycle has exactly one incoming and one outgoing edge, and thus  property \eqref{eq:in_eq_out} holds.}

Let us now assume that $\forall S\subset \Omega^n$  with $|S|=k\ge 1$, for $G(S)=(\Omega^n, E(S))$ induced by \eqref{eq:proof.graph} via  Lemma \ref{lemma1} 
property \eqref{eq:in_eq_out} holds.
In the \textbf{induction step} we now  show {that \ci{then} for all}
$S'\subset \Omega^n $  with $|S'|=k+1$
{property \eqref{eq:in_eq_out} also holds}  for $G(S')=(\Omega^n, E(S'))$.

For each $S'$ it holds $S' = S \cup \{\x'\}$ for some $S\subset \Omega^n, |S|=k$ and $\x' \in \Omega^n \setminus S $. Given $G(S)=(\Omega^n, E(S))$ and $G(\{\x'\}))=(\Omega^n, E(\{\x'\}))$, the edge set $E(S')$ 
can  be constructed as follows. Let $I=\{i\,|\, f_{i}(\x') \in S; i=1,\dots,n\}$:

{\textbf{Case 1}: Assume that $I=\emptyset$, that is, for all $i =1,\dots, n$ it holds $f_i(\x') \notin  S$, or equivalently 
 $f_i(\x') \in  \Omega^n \setminus S$, and therefore
 $f_i(\x') \in  \ci{\Omega^n \setminus S'}$.
Directly from the definitions we have
\begin{multline}
E(S) = \lbrace \big( f_{\leq i-1}(\vec{x}), f_{\leq i}(\vec{x}) \big),
\big(f_{\geq i}(\vec{x}), f_{\geq i+1}(\vec{x})\big) \,|\, \x  \in S \wedge f_i(\x) \in \Omega^n \setminus S \rbrace=\\
\lbrace \big( f_{\leq i-1}(\vec{x}), f_{\leq i}(\vec{x}) \big),
\big(f_{\leq i-1}(\xbar{\vec{x}}), f_{\leq i}(\xbar{\vec{x}}) \big) \,|\, \x 
       \in S \wedge f_i(\x) \in \Omega^n \setminus S \rbrace 
       \enspace
     \end{multline}
     and 
\begin{multline}
E(\{\x'\}) = \lbrace \big( f_{\leq i-1}(\vec{\x'}), f_{\leq i}(\vec{x}') \big),
       \big(f_{\geq i}(\vec{x}'), f_{\geq i+1}(\vec{x}')\big) \,|\,
       i=1,\dots n \rbrace=\\
       \lbrace \big( f_{\leq i-1}(\vec{\x'}), f_{\leq i}(\vec{x}') \big),
       \big(f_{\leq i}(\xbar{\vec{x}'}), f_{\leq i+1}(\xbar{\vec{x}'})\big) \,|\,
       i=1,\dots n \rbrace
       \enspace.
     \end{multline}
     If $\xbar{\vec{x}'} \in S$ then \ci{the} edges in $E(\{\x'\})$ are
     already in $E(S)$ and $E(S \cup \{x'\}) = E(S)$. By assumption
     $\vec{x}' \notin S$. Thus, if $\xbar{\vec{x}'} \notin S$ then
      $E(S)$ and $E(\{\x'\})$ are disjoint and
\begin{multline}
 E(S')=\lbrace \big( f_{\leq i-1}(\vec{x}), f_{\leq i}(\vec{x}) \big),
       \big(f_{\geq i}(\vec{x}), f_{\geq i+1}(\vec{x})\big) \,|\, \x \in S\cup \{\x'\} \wedge f_i(\x) \in \Omega^n \setminus \ci{S'} \rbrace\\= E(S) \cup E(\{\x'\}) \enspace.
     \end{multline}
     In both cases $\degp(v) = \degm(v)$ for all vertices of $G(S')$.}

\textbf{Case 2}: Let $I=\{i\,|\, f_{i}(\x') \in S; i=1,\dots,n\} \neq\emptyset$.
The set 
$E(S')$ can be constructed by removing edges from $E(S) \cup E(\{\x'\})$. This is
illustrated in Figure~\ref{fig:ind}.
We need to remove the edges 
\begin{equation}
R_i^S=\lbrace \big( f_{\leq i-1}(\vec{x}), f_{\leq i}(\vec{x}) \big),
\big(f_{\geq i}(\vec{x}), f_{\geq i+1}(\vec{x})\big) \,|\, \x  \in S \wedge f_i(\x) = \x'\rbrace\subseteq E(S)
\end{equation}
for $i\in I$, because $\x'$ is in the complement of $S$, but not in the complement of $S'$ and thus $R_i^S \nsubseteq E(S')$,\footnote{For $i\not\in I$ we have $R^S_i=\emptyset$, as $f_i(\x) = \x'$ would imply $\x= f_i(\x')$.} 
as well as the edges from 
\begin{equation}R_i^{\{\x'\}}=\lbrace \big( f_{\leq i-1}(\vec{\x'}), f_{\leq i}(\vec{x}') \big),
       \big(f_{\geq i}(\vec{x}'), f_{\geq
         i+1}(\vec{x}')\big)\rbrace\subseteq E(\{\x'\})\label{newprrof:a}
\enspace\end{equation}
for each $i\in I$,
because $f_{i}(\x') \notin  \Omega^n \setminus S'$  implies
$R_i^{\{\x'\}}\nsubseteq E(S')$.
We can rewrite 
\begin{align}
R_i^S&=\lbrace \big( f_{\leq i-1}(f_{i}(\vec{x'})), f_{\leq i}(f_{i}(\vec{x'})) \big),
\big(f_{\geq i}(f_{i}(\vec{x'})), f_{\geq i+1}(f_{i}(\vec{x'}))\big)  \rbrace\notag\\
&= \lbrace  (f_{\leq i}(\vec{x'}), f_{\leq i-1}(\vec{x'})), ( f_{\geq i+1}(\vec{x'}), f_{\geq i}(\vec{x'})) \rbrace \label{newprrof:b} \enspace.
\end{align}

Comparing \eqref{newprrof:a} to \eqref{newprrof:b} shows
that for each edge from $E(\{\x'\})$  not included in $E(S')$ we find
an reverse edge from $E(S)$ not included in $E(S')$.
From this it
follows that $\degp(v) = \degm(v)$ for all vertices of $G(S')$.
Thus, for all proper subsets $S\subset \Omega^n$ the graph $G(s)$
contains a cycle because of Theorem~\ref{graph_corollary}.
This implies that $S$ cannot be a closed set by Lemma~\ref{lem:reduction}. 
Thus, no proper subset of the state space is a closed set and the Markov chain is irreducible by
Theorem~\ref{thm:closedset}.

To prove \textbf{aperiodicity} of an irreducible Markov chain we only
need to identify one aperiodic state. There is one state $\x\in\Omega^n$
with $p(\x)\ge p(\y)$ for all $\y  \in\Omega^n$. By definition of the
modified Metropolis operator \eqref{eq:modified} 
(and also the standard operator \eqref{eq:standard}), there is always
the possibility that after reaching $\x$ the chain also stays in the
state $\x$. Thus, the state $\x$ has period one and the Markov chain is aperiodic.
\end{proof}

\section{Experiments}\label{sec:examples}
Our main result is theoretical and guarantees the irreducibility of the Metropolis chain for all  binary Gibbs
modelsfor which the modified and the vanilla Metropolis algorithm are identical.  This contributes to the basic understanding of the general Metropolis algorithm, which has many applications in machine learning.
Still, it is interesting to look at scenarios
where  the modified Metropolis algorithm differs from the original
from the practical point of view.
Therefore, we performed experiments on Ising models (with uniform coupling strength and no external field)  \citep{ising1925beitrag,HistoryLenzIsing}, which belong to the class of models for which the condition in Corollary~\ref{eq:standard} does not hold.
We selected Ising models because they are conceptually simple and have
been used to demonstrate convergence problems of the Metropolis
algorithm before.

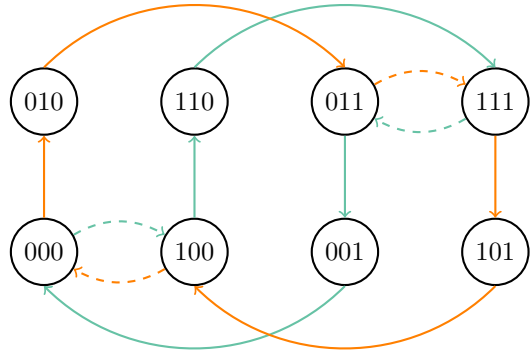
\begin{wrapfigure}{r}{.5\textwidth}
  \centering
  \begin{tikzpicture}[
		scale=2,
		state/.style={circle,draw=black,thick},
	]
	
	\node[state] (z000) at (0,0) {000};
	\node[state] (z100) at (1,0) {100};
	\node[state] (z010) at (0,1) {010};
	\node[state] (z110) at (1,1) {110};
	\node[state] (z001) at (2,0) {001};
	\node[state] (z101) at (3,0) {101};
	\node[state] (z011) at (2,1) {011};
	\node[state] (z111) at (3,1) {111};
	
	\definecolor{greenish}{RGB}{102,194,165}
	\definecolor{orangish}{RGB}{252,141,9}
	
	\draw[->, greenish, thick, dashed] (z000) to [bend left = 30] (z100);
	\draw[->, greenish, thick] (z100) to (z110);
	\draw[->, greenish, thick] (z110.north) to  [bend left = 45] (z111.north);
	
	\draw[->, greenish, thick, dashed] (z111) to [bend left = 30] (z011);
	\draw[->, greenish, thick] (z011) to (z001);
	\draw[->, greenish, thick] (z001.south) to [bend left = 45]  (z000.south);
	
	\draw[->, orange, thick, dashed] (z100) to [bend left = 30] (z000);
	\draw[->, orange, thick] (z000) to (z010);
	\draw[->, orange, thick] (z010.north) to  [bend left = 45] (z011.north);
	
	\draw[->, orange, thick, dashed] (z011) to [bend left = 30] (z111);
	\draw[->, orange, thick] (z111) to (z101);
	\draw[->, orange, thick] (z101.south) to [bend left = 45]  (z100.south);
	
\end{tikzpicture} 
  \caption{Example for the induction step in the proof of Theorem~\ref{thm:main}.
We consider the state space $\{0,1\}^3$, the set  $S =  \{ (0,0,0) \}$, and $\x' = (1,0,0)$.
Both $S$ and $\{\x'\}$ are singleton sets, so their graphs $G(S)$ and
$G(\{\x'\})$ form one cycle each, corresponding to the cycle of
inequalities described \ci{in the proof of Lemma~\ref{lemma:singeltonSet}}. For each edge
dropped from $G(S)$ there is an edge dropped from $G(\{\x'\})$ in the
opposite direction. This means that for each vertex $v$ of $G(S')$, the
property $\degp(v) = \degm(v)$ still holds and the remaining edges
form  disjunct cycles, in this case two cycles. The figure shows
    $G(  \{ (0,0,0) \} )$ in green, $G( \{ (1,0,0) \} )$  in orange,
     $G(  \{ (0,0,0),  (1,0,0) \} )$ with solid lines and the edges that
     are removed with dashed lines. \label{fig:ind} }
\end{wrapfigure}
We applied the original Metropolis algorithm, the modified Metroplis
algorithm as well as Gibbs sampling 
to small two-dimensional ferromagnetic Ising models (with no external field), see Section~\ref{app:ising} in the supplementary appendix. We measure the
 speed of convergence of a Markov chain by the spectral
gap, the difference between the largest eigenvalue, which is always
one for stochastic matrices, and the second largest eigenvalue modulus
$\lambda_2$ of the transition matrix (corresponding to $\vec{T}$).  A
spectral gap of value $1$ means that the Markov chain converges in a
single step, while a value of $0$ means that it is not an ergodic
chain. A value between $0$ and $1$ means that the Markov chain
converges exponentially towards the stationary distribution with the
error going down with  $\mathcal{O}(|\lambda_2|^k$), where $k$ numbers
the steps.\footnote{{Calculating the spectral gap limits our analysis to quite small models, because the size of the transition matrix depends on the
number of states, which grows exponentially with the number of
variables. However, the spectral gap quantifies the speed of
convergence \kb{exactly}.}}
We compared the convergence speed of the three methods by constructing
the full transition matrices and calculating the spectral gap directly
for $3\times 3$ {Ising} models with different coupling strengths $J$.  We use both Ising
models with periodic and non-periodic boundary conditions for the
lattice structure.  We tested the chessboard order and a linear order
for sweeping sites, see Section~\ref{app:counter} in the supplementary appendix.

There was no difference for the convergence speed between
chessboard and linear update order, therefore we only present the results for the 
chessboard order.
In general, the convergence profile of Gibbs sampling is relatively
simple. Without coupling ($J=0$) the sites are independent and the Markov chain ends up in the correct distribution in a single step. The stronger the coupling gets, the slower Gibbs sampling converges.
The graphs for the modified Metropolis operator contain a
discontinuity at $J = 0$ by definition. With zero coupling the modified Metropolis operator behaves exactly like the Gibbs operator, converging in a single step corresponding to a spectral gap of $1$.  For small non-zero coupling strengths the  modified Metropolis operator exhibits the same behavior as the classical Metropolis operator, both converge very slowly with a spectral gap close to $0$.

\begin{figure}[ht]
  \centering
  \includegraphics[width=.49\columnwidth]{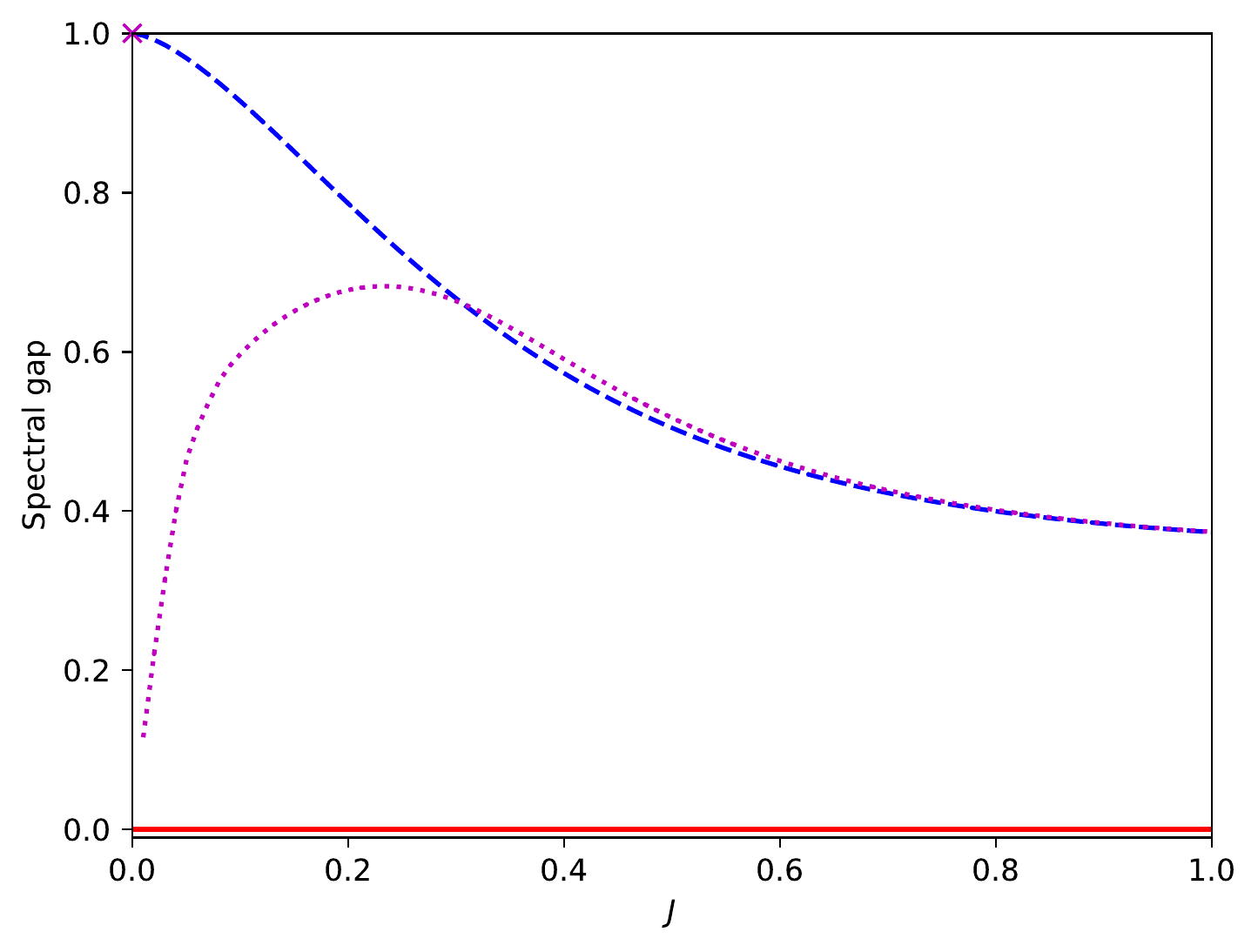}
  \includegraphics[width=.49\columnwidth]{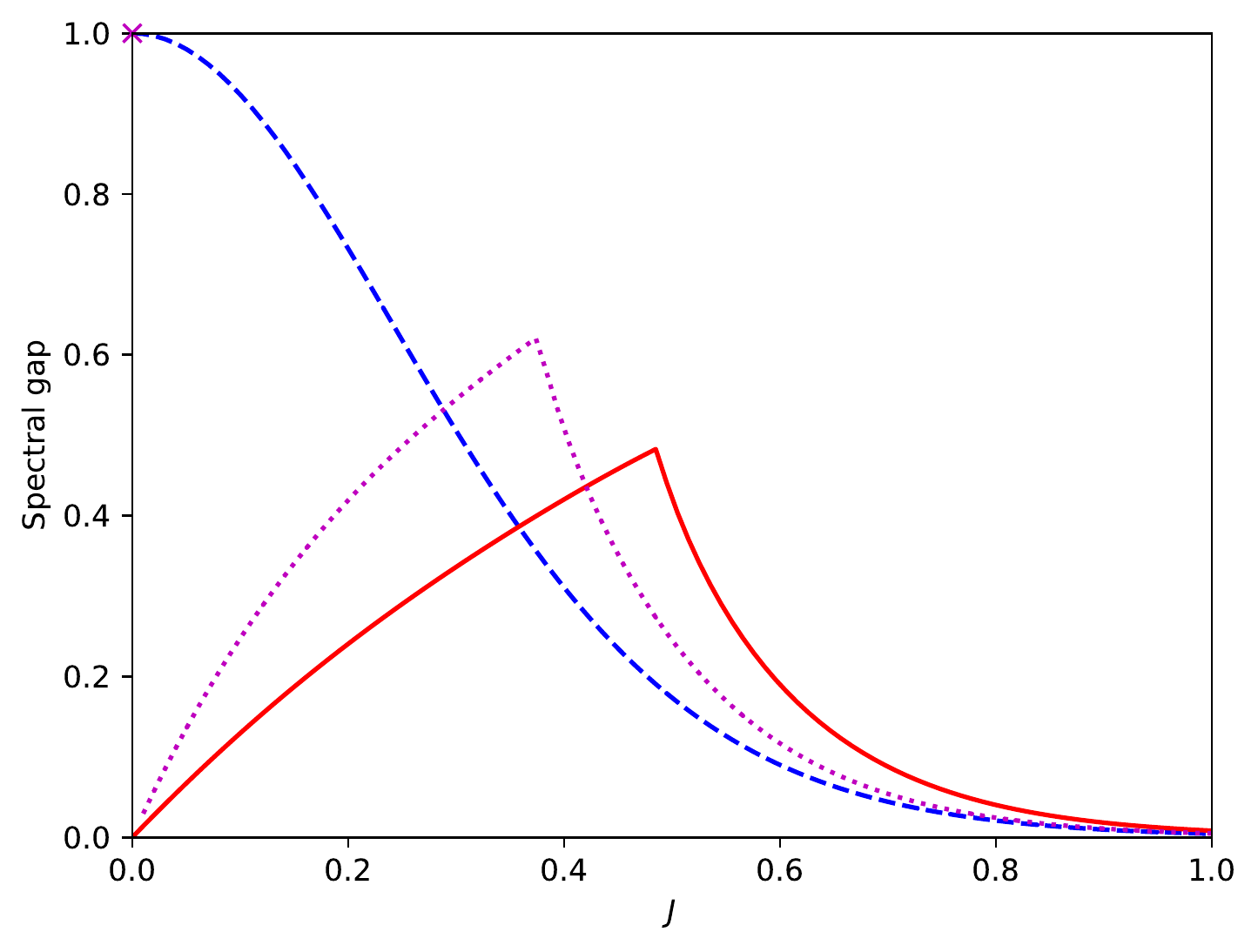}
  \vspace{-1ex}
\caption{The spectral gap of the transition matrix for a $3\times 3$
  Ising model with periodic boundary (left) and non-periodic (right) conditions
  for the Gibbs (blue, dashed)
  as well as 
  Metropolis (red) and modified Metropolis operator (magenta, dotted, {discontinuity at ${J=0}$ marked with $\times$})  in dependence on the coupling strength $J$.  \label{fig:periodic}   
}
\end{figure}

The results for {experiments} in which  the classical Metropolis algorithm does not
converge are shown in the left plot in Figure~\ref{fig:periodic}: {For a $3\times 3$ Ising model
with {a} periodic lattice structure}
the spectral gap for Metropolis sampling is $0$, irrespective of coupling
strength. 
{That is, the Markov
chain induced by the  classical Metropolis algorithm with both chessboard and linear update order is reducible for this model.} 
In contrast, the modified Metropolis algorithm  converges in 
this setting. After the discontinuity at $J = 0$, the convergence speed
improves quickly with rising coupling strength until it reaches a
point where it converges slightly faster than Gibbs sampling, but approaches the same convergence speed at very strong couplings.
The right plot in Figure \ref{fig:periodic} depicts the results for a case where both
the classical and modified Metropolis converge very slowly for models
with weak coupling, converge better with rising coupling strength
until they surpass Gibbs sampling. After they reach a point with
maximal convergence speed, the convergence speed decreases again and
all three sampling methods approach the same spectral gap for very
high coupling strengths.
Although the overall shapes of the curves are very similar {for both versions of the Metropolis algorithm}, the
spectral gap of the modified Metropolis operator rises quicker,
reaches its maximum earlier and then also decreases earlier and
faster. 
{In summary,} there are models with low $J$ where Gibbs
sampling converges the fastest, models with medium $J$ where the
modified Metropolis algorithm  converges the fastest, and models with
high $J$ where the classical algorithm converges the fastest. But overall the modified and classical Metropolis operator show a very similar behavior.

\section{Conclusions}\label{sec:conclusions}
 
 There has been a gap between theory and practice in MCMC sampling using the
 Metropolis algorithm.
The necessary mathematical conditions ensuring an
ergodic Markov chain do in general not hold for multivariate binary distributions
for the most popular {variant} of the Metropolis algorithm, which updates 
{the states of the random variables} in a
fixed order, but this has not stopped the community from 
employing it. 
This paper
  shows that only a very small modification of the
Metropolis algorithm is necessary to produce an ergodic
Markov chain that guarantees \kb{convergence
for all multivariate binary distributions}.
Moreover, our results allow to identify the scenarios in which the standard Metropolis 
algorithm
  for multivariate binary distributions  and fixed-order updates may not converge
 and those in which it is safe to use. Specifically, reducibility
   only occurs if there exist states that differ only in one
   variable and have the same probability  under the stationary
   distribution. If no such states exist, our convergence proof
   can directly be transfered to the standard Metropolis algorithm -- 
   and this will almost surely be the case  for typical models such as Boltzmann machines and spin glasses
   with (initial) parameters drawn from a continuous distribution.
   This insight closes the gap between practice and (Markov chain) theory for the Metropolis algorithm with fixed-order updates for multivariate binary distributions.
 

{We measured the spectral gap for Ising models (with uniform
  coupling strength and no external field),
  where}
the standard Metropolis algorithm  with fixed update order
is not guaranteed to converge.
{In settings where the}
standard Metropolis MCMC 
converges,  
the modified Metropolis algorithm has a similar convergence speed
profile as the original algorithm, but behaves slightly more like Gibbs sampling.
When looking at Ising models where
the standard Metropolis algorithms does
not converge, the modified version  does. Here
Gibbs sampling gives better results for low coupling
strengths, while the new Metropolis operator performs slightly better
for medium coupling strengths.

In summary, 
we  see no argument why the newly proposed 
transition operator should not be the default setting for Metropolis MCMC with
fixed-order updates of binary variables.

\ifpreprint
\begin{ack}
CI acknowledges support by the Villum Foundation through the project Deep Learning and Remote Sensing for Unlocking Global Ecosystem Resource Dynamics (DeReEco).
\end{ack}
\else
\section*{Broader impact}
Our article is in the field of theoretical machine learning and not directly leading to new applications and products. However, it establishes convergence guarantees for the Metropolis algorithm, arguably one of the most popular Markov chain Monte Carlo (MCMC) methods, and proposes a new algorithm that results in an ergodic Markov chain in cases where the vanilla Metropolis does not. Thus, practitioners can now rigorously ensure convergence to the desired distribution in their applications. Against the background that machine learning models are more and more used to make decisions that can have a strong impact on society, industries, and individuals, it is important to have a strong theoretical understanding of the employed methods and to be able to give rigorous guarantees.
\fi
\bibliography{proof.bib}
\bibliographystyle{abbrvnat}

\newpage
\appendix

\renewcommand\thefigure{\thesection.\arabic{figure}} 
\renewcommand\thetable{\thesection.\arabic{table}} 
\renewcommand{\theequation}{\thesection.\arabic{equation}}
\renewcommand{\thetheorem}{\thesection.\arabic{theorem}}

\section{Ising model}\label{app:ising}

An Ising model describes a
$d$-dimensional lattice with interactions between neighboring sites,
where the coupling strength $J$ controls how strong a tendency there
is for neighbouring sites to be the same. We have
$\Omega=\lbrace -1, +1 \rbrace$ and denote the set of pairs of indexes
of neighboring sites by $\Lambda$.  The probability of the state of an Ising
model is given
by
\[
p(\vec{x}) = \frac{e^{-E(\vec{x})}}{\sum_{\vec{x}\in\Omega^n} e^{-E(\vec{x})}} \text{\qquad with\qquad}
E(\vec{x}) = - J \sum_{(i,j) \in \Lambda} x_i x_j\enspace.
\]

\section{Examples where the Metropolis algorithm does not
  converge}\label{app:counter}

\begin{figure*}[htb]
\hfill\begin{tikzpicture}[site/.style={circle, minimum size=0.8cm, draw=black, thick} ]
\draw[step=1.0,black,thin] (-0.5,-0.5) grid (3.5,3.5);
\draw (0,3) node [site, fill=blue!10] {$\mathbbf[1.5]{+}$};
\draw (1,3) node [site, fill=white] {$\mathbbf[1.5]{+}$};
\draw (2,3) node [site, fill=blue!10] {$\mathbbf[1.5]{+}$};
\draw (3,3) node [site, fill=white] {$\mathbbf[1.5]{+}$};
\draw (0,2) node [site, fill=white] {$\mathbbf[1.5]{-}$};
\draw (1,2) node [site, fill=blue!10] {$\mathbbf[1.5]{-}$};
\draw (2,2) node [site, fill=white] {$\mathbbf[1.5]{-}$};
\draw (3,2) node [site, fill=blue!10] {$\mathbbf[1.5]{-}$};
\draw (0,1) node [site, fill=blue!10] {$\mathbbf[1.5]{+}$};
\draw (1,1) node [site, fill=white] {$\mathbbf[1.5]{+}$};
\draw (2,1) node [site, fill=blue!10] {$\mathbbf[1.5]{+}$};
\draw (3,1) node [site, fill=white] {$\mathbbf[1.5]{+}$};
\draw (0,0) node [site, fill=white] {$\mathbbf[1.5]{-}$};
\draw (1,0) node [site, fill=blue!10] {$\mathbbf[1.5]{-}$};
\draw (2,0) node [site, fill=white] {$\mathbbf[1.5]{-}$};
\draw (3,0) node [site, fill=blue!10] {$\mathbbf[1.5]{-}$};
\end{tikzpicture}\hfill
\begin{tikzpicture}[site/.style={circle, minimum size=0.8cm, draw=black, thick} ]
\draw[step=1.0,black,thin] (-0.5,-0.5) grid (3.5,3.5);
\draw (0,3) node [site, fill=white] {$\mathbbf[1.5]{-}$};
\draw (1,3) node [site, fill=blue!10] {$\mathbbf[1.5]{+}$};
\draw (2,3) node [site, fill=white] {$\mathbbf[1.5]{-}$};
\draw (3,3) node [site, fill=blue!10] {$\mathbbf[1.5]{+}$};
\draw (0,2) node [site, fill=blue!10] {$\mathbbf[1.5]{-}$};
\draw (1,2) node [site, fill=white] {$\mathbbf[1.5]{+}$};
\draw (2,2) node [site, fill=blue!10] {$\mathbbf[1.5]{-}$};
\draw (3,2) node [site, fill=white] {$\mathbbf[1.5]{+}$};
\draw (0,1) node [site, fill=white] {$\mathbbf[1.5]{-}$};
\draw (1,1) node [site, fill=blue!10] {$\mathbbf[1.5]{+}$};
\draw (2,1) node [site, fill=white] {$\mathbbf[1.5]{-}$};
\draw (3,1) node [site, fill=blue!10] {$\mathbbf[1.5]{+}$};
\draw (0,0) node [site, fill=blue!10] {$\mathbbf[1.5]{-}$};
\draw (1,0) node [site, fill=white] {$\mathbbf[1.5]{+}$};
\draw (2,0) node [site, fill=blue!10] {$\mathbbf[1.5]{-}$};
\draw (3,0) node [site, fill=white] {$\mathbbf[1.5]{+}$};
\end{tikzpicture}\hfill\phantom{}

\bigskip

\hfill\begin{tikzpicture}[site/.style={circle, minimum size=0.8cm, draw=black, thick} ]
\draw[step=1.0,black,thin] (-0.5,-0.5) grid (3.5,3.5);
\draw (0,3) node [site, fill=blue!10] {$\mathbbf[1.5]{-}$};
\draw (1,3) node [site, fill=white] {$\mathbbf[1.5]{-}$};
\draw (2,3) node [site, fill=blue!10] {$\mathbbf[1.5]{-}$};
\draw (3,3) node [site, fill=white] {$\mathbbf[1.5]{-}$};
\draw (0,2) node [site, fill=white] {$\mathbbf[1.5]{+}$};
\draw (1,2) node [site, fill=blue!10] {$\mathbbf[1.5]{+}$};
\draw (2,2) node [site, fill=white] {$\mathbbf[1.5]{+}$};
\draw (3,2) node [site, fill=blue!10] {$\mathbbf[1.5]{+}$};
\draw (0,1) node [site, fill=blue!10] {$\mathbbf[1.5]{-}$};
\draw (1,1) node [site, fill=white] {$\mathbbf[1.5]{-}$};
\draw (2,1) node [site, fill=blue!10] {$\mathbbf[1.5]{-}$};
\draw (3,1) node [site, fill=white] {$\mathbbf[1.5]{-}$};
\draw (0,0) node [site, fill=white] {$\mathbbf[1.5]{+}$};
\draw (1,0) node [site, fill=blue!10] {$\mathbbf[1.5]{+}$};
\draw (2,0) node [site, fill=white] {$\mathbbf[1.5]{+}$};
\draw (3,0) node [site, fill=blue!10] {$\mathbbf[1.5]{+}$};
\end{tikzpicture}\hfill
\begin{tikzpicture}[site/.style={circle, minimum size=0.8cm, draw=black, thick} ]
\draw[step=1.0,black,thin] (-0.5,-0.5) grid (3.5,3.5);
\draw (0,3) node [site, fill=blue!10] {$\mathbbf[1.5]{+}$};
\draw (1,3) node [site, fill=white] {$\mathbbf[1.5]{-}$};
\draw (2,3) node [site, fill=blue!10] {$\mathbbf[1.5]{+}$};
\draw (3,3) node [site, fill=white] {$\mathbbf[1.5]{-}$};
\draw (0,2) node [site, fill=white] {$\mathbbf[1.5]{+}$};
\draw (1,2) node [site, fill=blue!10] {$\mathbbf[1.5]{-}$};
\draw (2,2) node [site, fill=white] {$\mathbbf[1.5]{+}$};
\draw (3,2) node [site, fill=blue!10] {$\mathbbf[1.5]{-}$};
\draw (0,1) node [site, fill=blue!10] {$\mathbbf[1.5]{+}$};
\draw (1,1) node [site, fill=white] {$\mathbbf[1.5]{-}$};
\draw (2,1) node [site, fill=blue!10] {$\mathbbf[1.5]{+}$};
\draw (3,1) node [site, fill=white] {$\mathbbf[1.5]{-}$};
\draw (0,0) node [site, fill=white] {$\mathbbf[1.5]{+}$};
\draw (1,0) node [site, fill=blue!10] {$\mathbbf[1.5]{-}$};
\draw (2,0) node [site, fill=white] {$\mathbbf[1.5]{+}$};
\draw (3,0) node [site, fill=blue!10] {$\mathbbf[1.5]{-}$};
\end{tikzpicture}\hfill\phantom{}

\caption{Counter-example with chessboard update order. We assume a periodic Ising model with uniform coupling-strengths, where both of the dimensions of the Ising model have an even number of sites.
If the site activations form a striped pattern and are updated in a
chessboard pattern (here indicated by the coloring of the nodes, where
first the nodes of one color are updated, then the nodes of the other
color), they will never escape the striped pattern, switching deterministically from horizontal to vertical stripes after updating half the variables, switching to the complementary horizontal striped pattern after a full update step and switching back in another full update step.}
\label{fig:counterexamples_chess}
\end{figure*}
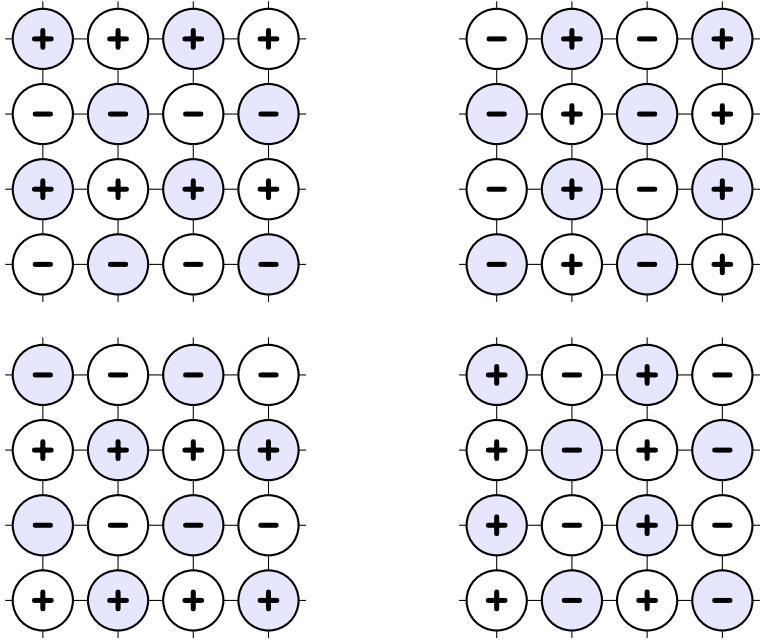

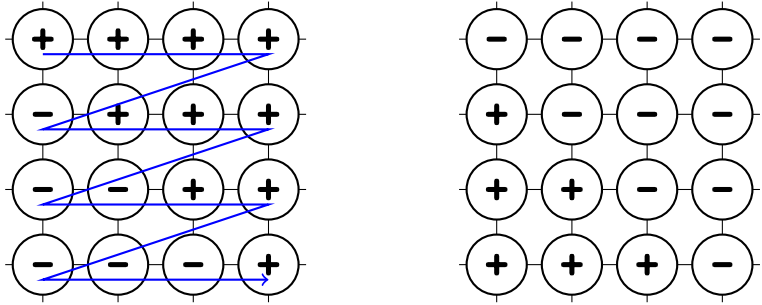
\begin{figure}[htb]
\hfill\begin{tikzpicture}[site/.style={circle, minimum size=0.8cm, draw=black, thick} ]
\draw[step=1.0,black,thin] (-0.5,-0.5) grid (3.5,3.5);
\draw (0,3) node [site, fill=white] {$\mathbbf[1.5]{+}$};
\draw (1,3) node [site, fill=white] {$\mathbbf[1.5]{+}$};
\draw (2,3) node [site, fill=white] {$\mathbbf[1.5]{+}$};
\draw (3,3) node [site, fill=white] {$\mathbbf[1.5]{+}$};
\draw (0,2) node [site, fill=white] {$\mathbbf[1.5]{-}$};
\draw (1,2) node [site, fill=white] {$\mathbbf[1.5]{+}$};
\draw (2,2) node [site, fill=white] {$\mathbbf[1.5]{+}$};
\draw (3,2) node [site, fill=white] {$\mathbbf[1.5]{+}$};
\draw (0,1) node [site, fill=white] {$\mathbbf[1.5]{-}$};
\draw (1,1) node [site, fill=white] {$\mathbbf[1.5]{-}$};
\draw (2,1) node [site, fill=white] {$\mathbbf[1.5]{+}$};
\draw (3,1) node [site, fill=white] {$\mathbbf[1.5]{+}$};
\draw (0,0) node [site, fill=white] {$\mathbbf[1.5]{-}$};
\draw (1,0) node [site, fill=white] {$\mathbbf[1.5]{-}$};
\draw (2,0) node [site, fill=white] {$\mathbbf[1.5]{-}$};
\draw (3,0) node [site, fill=white] {$\mathbbf[1.5]{+}$};
\begin{scope}[shift={(0,-0.2)}]
\draw[->, draw=blue,  thick]
(0,3) -- (1,3) -- (2,3) -- (3,3) -- 
(0,2) -- (1,2) -- (2,2) -- (3,2) -- 
(0,1) -- (1,1) -- (2,1) -- (3,1) -- 
(0,0) -- (1,0) -- (2,0) -- (3,0);
\end{scope}
\end{tikzpicture}\hfill\begin{tikzpicture}[site/.style={circle, minimum size=0.8cm, draw=black, thick} ]
\draw[step=1.0,black,thin] (-0.5,-0.5) grid (3.5,3.5);
\draw (0,3) node [site, fill=white] {$\mathbbf[1.5]{-}$};
\draw (1,3) node [site, fill=white] {$\mathbbf[1.5]{-}$};
\draw (2,3) node [site, fill=white] {$\mathbbf[1.5]{-}$};
\draw (3,3) node [site, fill=white] {$\mathbbf[1.5]{-}$};
\draw (0,2) node [site, fill=white] {$\mathbbf[1.5]{+}$};
\draw (1,2) node [site, fill=white] {$\mathbbf[1.5]{-}$};
\draw (2,2) node [site, fill=white] {$\mathbbf[1.5]{-}$};
\draw (3,2) node [site, fill=white] {$\mathbbf[1.5]{-}$};
\draw (0,1) node [site, fill=white] {$\mathbbf[1.5]{+}$};
\draw (1,1) node [site, fill=white] {$\mathbbf[1.5]{+}$};
\draw (2,1) node [site, fill=white] {$\mathbbf[1.5]{-}$};
\draw (3,1) node [site, fill=white] {$\mathbbf[1.5]{-}$};
\draw (0,0) node [site, fill=white] {$\mathbbf[1.5]{+}$};
\draw (1,0) node [site, fill=white] {$\mathbbf[1.5]{+}$};
\draw (2,0) node [site, fill=white] {$\mathbbf[1.5]{+}$};
\draw (3,0) node [site, fill=white] {$\mathbbf[1.5]{-}$};
\end{tikzpicture}\hfill\phantom{}
\caption{Counter-example with left-to-right, top-to-bottom scanning
  order. For a periodic Ising model, where both of the dimensions have
  the same number of sites, if the site activations form a triangular
  pattern as shown in the left figure, they will flip between two
  possibles states if updated in the order indicated by the arrow.}
\label{fig:counterexamples_linear}
\end{figure}

Real-world counter-examples, where the Metropolis acceptance function
with fixed-order updates leads to a reducible Markov chain, were found, for
instance,  by \citet{friedberg1970}, who simulated $4\times 4$
two-dimensional Ising models and discovered {that they can be}
``locked in configurations'', but  did not make the connection to
Markov chain theory and irreducibility. Two examples are shown in
Figure~\ref{fig:counterexamples_chess}
and Figure~\ref{fig:counterexamples_linear}.
Both examples consider \emph{periodic} models, that is, we
we assume periodic boundary
conditions (i.e., sites on one edge of the lattice are connected to
sites on the opposite edge).  

\end{document}

\clearpage
\newpage
%
%
For each $S'$ it holds $S' = S \cup \{\x'\}$ for some $S\subset \Omega^n, |S|=k$ and $\x' \in \Omega^n \setminus S $. Let $G(S)=(\Omega^n, E(S))$ and $G(\{\x'\}))=(\Omega^n, E(\{\x'\}))$.
We construct $E(S')$ from $E(S)$ by adding and removing edges.
Only an edge from the union of  the $N$ sets
\begin{multline}
A_i=
\lbrace \big( f_{\leq i-1}(\vec{\x'}), f_{\leq i}(\vec{x}') \big),\big(f_{\geq i}(\vec{x}'), f_{\geq i+1}(\vec{x}')\big) \rbrace= \\
\lbrace \big( f_{\leq i-1}(\vec{\x'}), f_{\leq i}(\vec{x}') \big),
       \big(f_{\leq i}(\xbar{\vec{x}'}), f_{\leq i+1}(\xbar{\vec{x}'})\big\rbrace
\end{multline}
can be added.
Only an edge from the union of  the $N$ sets
\begin{multline}
R_i = \lbrace \big( f_{\leq i-1}(\vec{x}), f_{\leq i}(\vec{x}) \big),
\big(f_{\geq i}(\vec{x}), f_{\geq i+1}(\vec{x})\big) \,|\, \x  \in S \wedge f_i(\x) = \x'\rbrace=\\
\lbrace \big( f_{\leq i-1}(\vec{x}), f_{\leq i}(\vec{x}) \big),
\big(f_{\leq i-1}(\xbar{\vec{x}}), f_{\leq i}(\xbar{\vec{x}}) \big) \,|\, \x 
       \in S \wedge f_i(\x) = \x' \rbrace 
\end{multline}
can be removed.
Let $i\in\{1,\dots,n\}$:

{\textbf{Case 1}: Assume that  $f_i(\x') \notin  S$.
As $f_i(\x) = \x'$ implies $\x= f_i(\x')$,  $R_i$ is empty.
If $\xbar{\vec{x}'} \in S$ then the edges in $A_i$ are
     already in $E(S)$.
     If $\xbar{\vec{x}'} \notin S$ then
      $E(S)$ and $A_i$ are disjoint and
      
     ARGUMENT DOES NOT WORK!!!!

\begin{multline}
 E(S')=\lbrace \big( f_{\leq i-1}(\vec{x}), f_{\leq i}(\vec{x}) \big),
       \big(f_{\geq i}(\vec{x}), f_{\geq i+1}(\vec{x})\big) \,|\, \x \in S\cup \{\x'\} \wedge f_i(\x) \in \Omega^n \setminus \ci{S'} \rbrace\\= E(S) \cup A_i \enspace.
     \end{multline}

, or equivalently 
 $f_i(\x') \in  \Omega^n \setminus S$, and therefore
 $f_i(\x') \in  \ci{\Omega^n \setminus S'}$.
Directly from the definitions we have
\begin{multline}
E(S) = \lbrace \big( f_{\leq i-1}(\vec{x}), f_{\leq i}(\vec{x}) \big),
\big(f_{\geq i}(\vec{x}), f_{\geq i+1}(\vec{x})\big) \,|\, \x  \in S \wedge f_i(\x) \in \Omega^n \setminus S \rbrace=\\
\lbrace \big( f_{\leq i-1}(\vec{x}), f_{\leq i}(\vec{x}) \big),
\big(f_{\leq i-1}(\xbar{\vec{x}}), f_{\leq i}(\xbar{\vec{x}}) \big) \,|\, \x 
       \in S \wedge f_i(\x) \in \Omega^n \setminus S \rbrace 
       \enspace
     \end{multline}
     and 
\begin{multline}
E(\{\x'\}) = \lbrace \big( f_{\leq i-1}(\vec{\x'}), f_{\leq i}(\vec{x}') \big),
       \big(f_{\geq i}(\vec{x}'), f_{\geq i+1}(\vec{x}')\big) \,|\,
       i=1,\dots n \rbrace=\\
       \lbrace \big( f_{\leq i-1}(\vec{\x'}), f_{\leq i}(\vec{x}') \big),
       \big(f_{\leq i}(\xbar{\vec{x}'}), f_{\leq i+1}(\xbar{\vec{x}'})\big) \,|\,
       i=1,\dots n \rbrace
       \enspace.
     \end{multline}
     If $\xbar{\vec{x}'} \in S$ then \ci{the} edges in $E(\{\x'\})$ are
     already in $E(S)$ and $E(S \cup \{x'\}) = E(S)$. By assumption
     $\vec{x}' \notin S$. Thus, 
     In both cases $\degp(v) = \degm(v)$ for all vertices of $G(S')$.}

\textbf{Case 2}: Assume .... 
$\y_i:=f_{j_i}(\x') \in S, i=1\dots, l$, 
{then we can} construct
$E(S')$ by removing edges from $E(S) \cup E(\{\x'\})$. This is
illustrated in Figure~\ref{fig:ind}.
For each $j_i, i=1,\dots, l$, we have by definition 
  \begin{equation} \label{eq:edgesToRemoveXPart1}
\lbrace \big( f_{\leq j_i-1}(\vec{\x'}), f_{\leq j_i}(\vec{x}') \big),
       \big(f_{\geq j_i}(\vec{x}'), f_{\geq
         j_i+1}(\vec{x}')\big)\rbrace \subseteq E(\{\x'\}) 
     \end{equation}
 while $f_{j_i}(\x') \notin  \Omega^n \setminus S'$  implies
\begin{equation} \label{eq:edgesToRemoveX}
\lbrace \big( f_{\leq j_i-1}(\vec{\x'}), f_{\leq j_i}(\vec{x}') \big),
       \big(f_{\geq j_i}(\vec{x}'), f_{\geq j_i+1}(\vec{x}')\big)\rbrace \nsubseteq E(S') \enspace.
\end{equation}
 Because $\vec{y}_{i}\in S$ we have
\begin{equation} 
\lbrace \big( f_{\leq j_i-1}(\vec{y}_{i}), f_{\leq j_i}(\vec{y}_{i}) \big),\\
       \big(f_{\geq j_i}(\vec{y}_{i}), f_{\geq j_i+1}(\vec{y}_{i})\big)\rbrace  \subseteq 
       \af{E(S)}\enspace.
\end{equation} 
As $\vec{y}_{i}\in S$ we also have $\vec{y}_{i}\in S'$. 
But $f_{j_i}(\y_{i}) \notin  \Omega^n
  \setminus S'$ because $f_{j_i}(\y_i)=\x'$ and $\x'\in S'$.
Thus 
\begin{equation} 
\lbrace \big( f_{\leq j_i-1}(\vec{y}_{i}), f_{\leq j_i}(\vec{y}_{i}) \big),
       \big(f_{\geq j_i}(\vec{y}_{i}), f_{\geq j_i+1}(\vec{y}_{i})\big)\rbrace  \nsubseteq E(S')\enspace. 
\end{equation} 
Now we can use  $\y_i=f_{j_i}(\x') \in S$ and rewrite 
\begin{multline} \label{eq:edgesToRemoveY}
\lbrace \big( f_{\leq j_i-1}(\vec{y}_i), f_{\leq j_i}(\vec{y_i}) \big),
    \big(f_{\geq j_i}(\vec{y_i}), f_{\geq j_i+1}(\vec{y_i})\big)\rbrace \\
=  \lbrace  \big(f_{\leq j_i-1}(f_{j_i}(\vec{x'})), f_{\leq
  j_i}(f_{j_i}(\vec{x'}))\big), ( f_{\geq j_i}(f_{j_i}(\vec{x'})), f_{\geq j_i+1}(f_{j_i}(\vec{x'}))\big) \rbrace   \\
=  \lbrace  (f_{\leq j_i}(\vec{x'}), f_{\leq j_i-1}(\vec{x'})), ( f_{\geq j_i+1}(\vec{x'}), f_{\geq j_i}(\vec{x'})) \rbrace  \enspace.
\end{multline}
Comparing \eqref{eq:edgesToRemoveX} to \eqref{eq:edgesToRemoveY} shows
that for each edge from $E(\{\x'\})$  not included in $E(S')$ we find
an reverse edge from $E(S)$ not included in $E(S')$. From this it
follows that $\degp(v) = \degm(v)$ for all vertices of $G(S')$.
Thus, for all proper subsets $S\subset \Omega^n$ the graph $G(s)$
contains a cycle because of Theorem~\ref{graph_corollary}.
This implies that $S$ cannot be a closed set by Lemma~\ref{lem:reduction}. 
Thus, no proper subset of the state space is a closed set and the Markov chain is irreducible by
Theorem~\ref{thm:closedset}.

\newpage
\section{old proof}

\textbf{Case 2}: If there exist $j_{{1}}, \dots j_l, l\leq n$, with
$\y_i:=f_{j_i}(\x') \in S, i=1\dots, l$, 
{then we can} construct
$E(S')$ by removing edges from $E(S) \cup E(\{\x'\})$. This is
illustrated in Figure~\ref{fig:ind}.
For each $j_i, i=1,\dots, l$, we have by definition 
  \begin{equation} \label{eq:edgesToRemoveXPart1}
\lbrace \big( f_{\leq j_i-1}(\vec{\x'}), f_{\leq j_i}(\vec{x}') \big),
       \big(f_{\geq j_i}(\vec{x}'), f_{\geq
         j_i+1}(\vec{x}')\big)\rbrace \subseteq E(\{\x'\}) 
     \end{equation}
 while $f_{j_i}(\x') \notin  \Omega^n \setminus S'$  implies
\begin{equation} \label{eq:edgesToRemoveX}
\lbrace \big( f_{\leq j_i-1}(\vec{\x'}), f_{\leq j_i}(\vec{x}') \big),
       \big(f_{\geq j_i}(\vec{x}'), f_{\geq j_i+1}(\vec{x}')\big)\rbrace \nsubseteq E(S') \enspace.
\end{equation}
 Because $\vec{y}_{i}\in S$ we have
\begin{equation} 
\lbrace \big( f_{\leq j_i-1}(\vec{y}_{i}), f_{\leq j_i}(\vec{y}_{i}) \big),\\
       \big(f_{\geq j_i}(\vec{y}_{i}), f_{\geq j_i+1}(\vec{y}_{i})\big)\rbrace  \subseteq 
       \af{E(S)}\enspace.
\end{equation} 
As $\vec{y}_{i}\in S$ we also have $\vec{y}_{i}\in S'$. 
But $f_{j_i}(\y_{i}) \notin  \Omega^n
  \setminus S'$ because $f_{j_i}(\y_i)=\x'$ and $\x'\in S'$.
Thus 
\begin{equation} 
\lbrace \big( f_{\leq j_i-1}(\vec{y}_{i}), f_{\leq j_i}(\vec{y}_{i}) \big),
       \big(f_{\geq j_i}(\vec{y}_{i}), f_{\geq j_i+1}(\vec{y}_{i})\big)\rbrace  \nsubseteq E(S')\enspace. 
\end{equation} 
Now we can use  $\y_i=f_{j_i}(\x') \in S$ and rewrite 
\begin{multline} \label{eq:edgesToRemoveY}
\lbrace \big( f_{\leq j_i-1}(\vec{y}_i), f_{\leq j_i}(\vec{y_i}) \big),
    \big(f_{\geq j_i}(\vec{y_i}), f_{\geq j_i+1}(\vec{y_i})\big)\rbrace \\
=  \lbrace  \big(f_{\leq j_i-1}(f_{j_i}(\vec{x'})), f_{\leq
  j_i}(f_{j_i}(\vec{x'}))\big), ( f_{\geq j_i}(f_{j_i}(\vec{x'})), f_{\geq j_i+1}(f_{j_i}(\vec{x'}))\big) \rbrace   \\
=  \lbrace  (f_{\leq j_i}(\vec{x'}), f_{\leq j_i-1}(\vec{x'})), ( f_{\geq j_i+1}(\vec{x'}), f_{\geq j_i}(\vec{x'})) \rbrace  \enspace.
\end{multline}
Comparing \eqref{eq:edgesToRemoveX} to \eqref{eq:edgesToRemoveY} shows
that for each edge from $E(\{\x'\})$  not included in $E(S')$ we find
an reverse edge from $E(S)$ not included in $E(S')$.\footnote{%
No other edges are removed. The candidates for removal are the union of $R_i=\lbrace \big( f_{\leq i-1}(\vec{x}), f_{\leq i}(\vec{x}) \big),
\big(f_{\geq i}(\vec{x}), f_{\geq i+1}(\vec{x})\big) \,|\, \x  \in S \wedge f_i(\x) = \x'\rbrace$ for $i=1,\dots,n$. 
       For $i\not\in\{j_{{1}}, \dots j_l\}$ we have $R_k=\emptyset$, as $f_i(\x) = \x'$ would imply $\x= f_i(\x')$.}
From this it
follows that $\degp(v) = \degm(v)$ for all vertices of $G(S')$.
Thus, for all proper subsets $S\subset \Omega^n$ the graph $G(s)$
contains a cycle because of Theorem~\ref{graph_corollary}.
This implies that $S$ cannot be a closed set by Lemma~\ref{lem:reduction}. 
Thus, no proper subset of the state space is a closed set and the Markov chain is irreducible by
Theorem~\ref{thm:closedset}.

\section{safety copy}
Let $j_{{1}}, \dots j_l, l\leq n$, be all indices with
$\y_i:=f_{j_i}(\x') \in S, i=1\dots, l$, 
then 
$E(S')$ can be constructed by removing edges from $E(S) \cup E(\{\x'\})$. This is
illustrated in Figure~\ref{fig:ind}.
We need to remove (i) the edges $R_i^S=\lbrace \big( f_{\leq i-1}(\vec{x}), f_{\leq i}(\vec{x}) \big),
\big(f_{\geq i}(\vec{x}), f_{\geq i+1}(\vec{x})\big) \,|\, \x  \in S \wedge f_i(\x) = \x'\rbrace$ for $i\in\{j_{{1}}, \dots j_l\}$ from $E(S)$, because $\x'$ is in the complement of $S$, but not in the complement of $S'$,\footnote{For $i\not\in\{j_{{1}}, \dots j_l\}$ we have $R^S_i=\emptyset$, as $f_i(\x) = \x'$ would imply $\x= f_i(\x')$.} 
and (ii) the edges from $R_i^{\{\x'\}}=\lbrace \big( f_{\leq j_i-1}(\vec{\x'}), f_{\leq j_i}(\vec{x}') \big),
       \big(f_{\geq j_i}(\vec{x}'), f_{\geq
         j_i+1}(\vec{x}')\big)\rbrace$.
For each $j_i, i=1,\dots, l$, we have by definition 
  \begin{equation} \label{eq:edgesToRemoveXPart1}
\lbrace \big( f_{\leq j_i-1}(\vec{\x'}), f_{\leq j_i}(\vec{x}') \big),
       \big(f_{\geq j_i}(\vec{x}'), f_{\geq
         j_i+1}(\vec{x}')\big)\rbrace \subseteq E(\{\x'\}) 
     \end{equation}
 while $f_{j_i}(\x') \notin  \Omega^n \setminus S'$  implies
\begin{equation} \label{eq:edgesToRemoveX}
\lbrace \big( f_{\leq j_i-1}(\vec{\x'}), f_{\leq j_i}(\vec{x}') \big),
       \big(f_{\geq j_i}(\vec{x}'), f_{\geq j_i+1}(\vec{x}')\big)\rbrace \nsubseteq E(S') \enspace.
\end{equation}
 Because $\vec{y}_{i}\in S$ we have
\begin{equation} 
\lbrace \big( f_{\leq j_i-1}(\vec{y}_{i}), f_{\leq j_i}(\vec{y}_{i}) \big),\\
       \big(f_{\geq j_i}(\vec{y}_{i}), f_{\geq j_i+1}(\vec{y}_{i})\big)\rbrace  \subseteq 
       \af{E(S)}\enspace.
\end{equation} 
As $\vec{y}_{i}\in S$ we also have $\vec{y}_{i}\in S'$. 
But $f_{j_i}(\y_{i}) \notin  \Omega^n
  \setminus S'$ because $f_{j_i}(\y_i)=\x'$ and $\x'\in S'$.
Thus 
\begin{equation} 
\lbrace \big( f_{\leq j_i-1}(\vec{y}_{i}), f_{\leq j_i}(\vec{y}_{i}) \big),
       \big(f_{\geq j_i}(\vec{y}_{i}), f_{\geq j_i+1}(\vec{y}_{i})\big)\rbrace  \nsubseteq E(S')\enspace. 
\end{equation} 
Now we can use  $\y_i=f_{j_i}(\x') \in S$ and rewrite 
\begin{multline} \label{eq:edgesToRemoveY}
\lbrace \big( f_{\leq j_i-1}(\vec{y}_i), f_{\leq j_i}(\vec{y_i}) \big),
    \big(f_{\geq j_i}(\vec{y_i}), f_{\geq j_i+1}(\vec{y_i})\big)\rbrace \\
=  \lbrace  \big(f_{\leq j_i-1}(f_{j_i}(\vec{x'})), f_{\leq
  j_i}(f_{j_i}(\vec{x'}))\big), ( f_{\geq j_i}(f_{j_i}(\vec{x'})), f_{\geq j_i+1}(f_{j_i}(\vec{x'}))\big) \rbrace   \\
=  \lbrace  (f_{\leq j_i}(\vec{x'}), f_{\leq j_i-1}(\vec{x'})), ( f_{\geq j_i+1}(\vec{x'}), f_{\geq j_i}(\vec{x'})) \rbrace  \enspace.
\end{multline}
Comparing \eqref{eq:edgesToRemoveX} to \eqref{eq:edgesToRemoveY} shows
that for each edge from $E(\{\x'\})$  not included in $E(S')$ we find
an reverse edge from $E(S)$ not included in $E(S')$.
From this it
follows that $\degp(v) = \degm(v)$ for all vertices of $G(S')$.

\end{document}

\section{Numerical experiments}
We applied the original Metropolis algorithm, the modified Metroplis
algorithm as well as Gibbs sampling 
to small two-dimensional ferromagnetic Ising models (with no external field).

The speed of convergence of a Markov chain is bounded by the spectral
gap, the difference between the largest eigenvalue, which is always
one for stochastic matrices, and the second largest eigenvalue modulus
$\lambda_2$ of the transition matrix (corresponding to $\vec{T}$).  A
spectral gap of value $1$ means that the Markov chain converges in a
single step, while a value of $0$ means that it is not an ergodic
chain. A value between $0$ and $1$ means that the Markov chain
converges exponentially towards the stationary distribution with the
error going down with  $\mathcal{O}(|\lambda_2|^k$), where $k$ numbers
the steps.

\paragraph{Experimental setup}

We compared the convergence speed of the three methods by constructing
the full transition matrices and calculating the spectral gap directly
for {Ising} models with different coupling strengths $J$.  We use both Ising
models with periodic and non-periodic boundary conditions for the
lattice structure.  We test the chessboard order and a linear order
for sweeping sites.

{Calculating the spectral gap limits our analysis to quite small (in the following  we consider $3\times 3$)
models, because the size of the transition matrix depends on the
number of states, which grows exponentially with the number of
variables. However, the spectral gap quantifies the speed of
convergence \kb{exactly}.}

\paragraph{Results}

There was no difference for the convergence speed between
chessboard and linear update order, shown here are the results for the 
chessboard order.

In general, the convergence profile of Gibbs sampling is relatively
simple. Without coupling ($J=0$) the sites are independent and the Markov chain ends up in the correct distribution in a single step. The stronger the coupling gets, the slower Gibbs sampling converges.

The graphs for the modified Metropolis operator contain a
discontinuity at $J = 0$ by definition. With zero coupling the modified Metropolis operator behaves exactly like the Gibbs operator, converging in a single step corresponding to a spectral gap of $1$.  For small non-zero coupling strength the  modified Metropolis operator behaves very similar to the classical Metropolis operator, both converge very slowly with a spectral gap close to $0$.

\begin{figure}[ht]
  \centering
  \includegraphics[width=\columnwidth]{figures/experiments/periodic3x3}
  \vspace{-1ex}
\caption{The spectral gap of the transition matrix for a $3\times 3$
  Ising model with periodic boundary conditions
  for the Gibbs (blue, dashed)
  as well as 
  Metropolis (red) and modified Metropolis operator (magenta, dotted, {discontinuity at ${J=0}$ marked with $\times$})  in dependence on the coupling strength $J$.  \label{fig:periodic_app}   
}
\end{figure}

The results for {experiments} in which  the classical Metropolis algorithm does not
converge are shown in Figure~\ref{fig:periodic}: {For a $3\times 3$ Ising model
with {a} periodic lattice structure}
the spectral gap for Metropolis sampling is $0$, irrespective of coupling
strength. 
{That is, the Markov
chain induced by the  classical Metropolis algorithm with both chess
board and linear update order is reducible for this model.} 
The modified Metropolis algorithm in contrast converges in 
this setting. After the discontinuity at $J = 0$, the convergence speed
improves quickly with rising coupling strength until it reaches a
point where it converges slightly faster than Gibbs sampling, but approaches the same convergence speed at very strong couplings.

\begin{figure}[ht!]
  \centering            
  \includegraphics[width=\columnwidth]{figures/experiments/nonperiodic3x3}
  \vspace{-1ex}
\caption{The spectral gap of the transition matrix for a $3\times 3$ Ising
  model with non-periodic boundary conditions for the Gibbs (blue, dashed), Metropolis (red) and modified Metropolis operator (magenta, dotted, {discontinuity at ${J=0}$ marked with $\times$})  in dependence on the coupling strength $J$. \label{fig:nonperiodic_app} }
  \label{fig:slem}
\end{figure}

Figure \ref{fig:nonperiodic} depicts the results for a case where both
the classical and modified Metropolis converge very slowly for models
with weak coupling, converge better with rising coupling strength
until they surpass Gibbs sampling. After they reach a point with
maximal convergence speed, the convergence speed decreases again and
all three sampling methods approach the same spectral gap for very
high coupling strengths.
Although the overall shapes of the curves are very similar {for both versions of the Metropolis algorithm}, the
spectral gap of the modified Metropolis operator rises quicker,
reaches its maximum earlier and then also decreases earlier and
faster. 
{In summary,} there are models with low $J$ where Gibbs
sampling converges the fastest, models with medium $J$ where the
modified Metropolis algorithm  converges the fastest, and models with
high $J$ where the classical algorithm converges the fastest. But overall the modified and classical Metropolis operator behave quite
similarly.

\end{document}